\title{Explaining grokking through circuit efficiency}
\author[*, 1]{Vikrant Varma}
\author[*, 1]{Rohin Shah}
\author[1]{Zachary Kenton}
\author[1]{János Kramár}
\author[1]{Ramana Kumar}
\affil[*]{Equal contributions}
\affil[1]{Google DeepMind}
\begin{abstract}
One of the most surprising puzzles in neural network generalisation is \emph{grokking}: a network with perfect training accuracy but poor generalisation will, upon further training, transition to perfect generalisation. We propose that grokking occurs when the task admits a generalising solution and a memorising solution, where the generalising solution is slower to learn but more \emph{efficient}, producing larger logits with the same parameter norm. We hypothesise that memorising circuits become more inefficient with larger training datasets while generalising circuits do not, suggesting there is a critical dataset size at which memorisation and generalisation are equally efficient. We make and confirm four novel predictions about grokking, providing significant evidence in favour of our explanation. Most strikingly, we demonstrate two novel and surprising behaviours: \emph{ungrokking}, in which a network regresses from perfect to low test accuracy, and \emph{semi-grokking}, in which a network shows delayed generalisation to partial rather than perfect test accuracy.
\end{abstract}
\begin{document}
\maketitle

\section{Introduction}

When training a neural network, we expect that once training loss converges to a low value, the network will no longer change much. \citet{power2021grokking} discovered a phenomenon dubbed \emph{grokking} that drastically violates this expectation. The network first ``memorises'' the data, achieving low and stable training loss with poor generalisation, but with further training transitions to perfect generalisation. We are left with the question: \textit{why does the network's test performance improve dramatically upon continued training, having already achieved nearly perfect training performance?}

Recent answers to this question vary widely, including the difficulty of representation learning~\citep{liu2022towards}, the scale of parameters at initialisation~\citep{liu2023omnigrok}, spikes in loss ("slingshots")~\citep{thilak2022slingshot}, random walks among optimal solutions~\citep{millidge2022grokking}, and the simplicity of the generalising solution~\citep[Appendix E]{nanda2023grokking}. In this paper, we argue that the last explanation is correct, by stating a specific theory in this genre, deriving novel predictions from the theory, and confirming the predictions empirically.

We analyse the interplay between the internal mechanisms that the neural network uses to calculate the outputs, which we loosely call ``circuits''~\citep{olah2020zoom}. We hypothesise that there are two families of circuits that both achieve good training performance: one which generalises well (\Gen) and one which memorises the training dataset (\Mem). The key insight is that \emph{when there are multiple circuits that achieve strong training performance, weight decay prefers circuits with high ``efficiency''}, that is, circuits that require less parameter norm to produce a given logit value.

Efficiency answers our question above: if \Gen is more efficient than \Mem, gradient descent can reduce nearly perfect training loss even further by strengthening \Gen while weakening \Mem, which then leads to a transition in test performance. With this understanding, we demonstrate in Section~\ref{sec:theory} that three key properties are sufficient for grokking: (1) \Gen generalises well while \Mem does not, (2) \Gen is more efficient than \Mem, and (3) \Gen is learned more slowly than \Mem.

Since \Gen generalises well, it automatically works for any new data points that are added to the training dataset, and so its efficiency should be independent of the size of the training dataset. In contrast, \Mem must memorise any additional data points added to the training dataset, and so its efficiency should decrease as training dataset size increases. We validate these predictions by quantifying efficiencies for various dataset sizes for both \Mem and \Gen.

This suggests that there exists a crossover point at which \Gen becomes more efficient than \Mem, which we call the critical dataset size $\criticalDatasetSize$. By analysing dynamics at $\criticalDatasetSize$, we predict and demonstrate two new behaviours (Figure~\ref{fig:introduction}). In \emph{ungrokking}, a model that has successfully grokked returns to poor test accuracy when further trained on a dataset much smaller than $\criticalDatasetSize$. In \emph{semi-grokking}, we choose a dataset size where \Gen and \Mem are similarly efficient, leading to a phase transition but only to middling test accuracy.

\begin{figure}[t]
    \centering
    \begin{subfigure}[t]{\textwidth}
        \centering
        \includegraphics[width=0.4\linewidth]{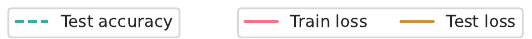}
    \end{subfigure}
    \\
    \begin{subfigure}[t]{0.31\textwidth}
        \centering
        \includegraphics[width=\linewidth]{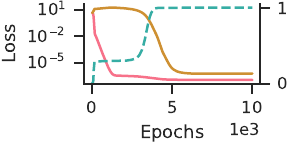}
        \caption{\textbf{Grokking.} The original behaviour from \citet{power2021grokking}. We train on a dataset with $\numTrainExamples \gg \criticalDatasetSize$. As a result, \Gen is strongly preferred to \Mem at convergence, and we observe a transition to very low test loss (100\% test accuracy).}
    \end{subfigure} \hfill
    \begin{subfigure}[t]{0.31\textwidth}
        \centering
        \includegraphics[width=\linewidth]{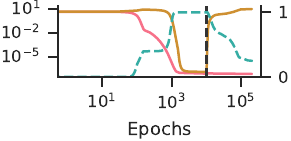}
        \caption{\textbf{Ungrokking.} If we take a network that has already grokked and train it on a new dataset with $\numTrainExamples < \criticalDatasetSize$, the network reverts to significant memorisation, leading to a transition back to poor test loss. (Note the log scale for the x-axis.)}
        \label{fig:ungrokking_accuracy}
    \end{subfigure}  \hfill
    \begin{subfigure}[t]{0.31\textwidth}
        \centering
        \includegraphics[width=\linewidth]{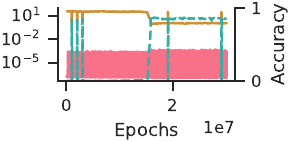}
        \caption{\textbf{Semi-grokking.} When $\numTrainExamples \sim \criticalDatasetSize$, the memorising algorithm and generalising algorithm compete with each other at convergence, so we observe a transition to improved but not perfect test loss (test accuracy of 83\%).}
        \label{fig:semi_grokking_accuracy}
    \end{subfigure}
    \caption{\textbf{Novel grokking phenomena.} When grokking occurs, we expect there are two algorithms that perform well at training: ``memorisation'' (\Mem, with poor test performance) and ``generalisation'' (\Gen, with strong test performance). Weight decay strengthens \Gen over \Mem as the training dataset size increases. By analysing the point at which \Gen and \Mem are equally strong (the critical dataset size $\criticalDatasetSize$), we predict and confirm two novel behaviours: \emph{ungrokking} and \emph{semi-grokking}.}
    \label{fig:introduction}
\end{figure}

We make the following contributions:
\begin{enumerate}
    \item We demonstrate the sufficiency of three ingredients for grokking through a constructed simulation (Section~\ref{sec:theory}).
    \item By analysing dynamics at the ``critical dataset size'' implied by our theory, we \emph{predict} two novel behaviours: \emph{semi-grokking} and \emph{ungrokking} (Section~\ref{sec:dataset-size-conceptual}).
    \item We confirm our predictions through careful experiments, including demonstrating semi-grokking and ungrokking in practice (Section~\ref{sec:grokking-empirics}).
\end{enumerate}

\section{Notation} \label{sec:notation}

We consider classification using deep neural networks under the cross-entropy loss. In particular, we are given a set of inputs $\Inputs$, a set of labels $\Labels$, and a training dataset, $\Dataset = \set{ (\Input{1}, \LabelTrue{1}), \dots (\Input{D}, \LabelTrue{D})}$.

For an arbitrary classifier $\classifier : \Inputs \times \Labels \rightarrow \Reals$, the softmax cross entropy loss is given by:
\begin{equation} \label{eq:x-ent}
    \LossXEnt(\classifier) = -\frac{1}{\numTrainExamples} \sum\limits_{(\Input{}, \LabelTrue{}) \in \Dataset} \log \frac{\exp(\classifier(\Input{}, \LabelTrue{}))}{\sum\limits_{\LabelVar \in \Labels} \exp(\classifier(\Input{}, \LabelVar))}.
\end{equation}
The output of a classifier for a specific class is the class \emph{logit}, denoted by $\logit{\Label{}}{\classifier}(\Input{}) \coloneqq \classifier(\Input{}, \Label{})$. When the input $\Input{}$ is clear from context, we will denote the logit as $\logit{\Label{}}{\classifier}$. We denote the vector of the logits for all classes for a given input as $\logits{\classifier}(\Input{})$ or $\logits{\classifier}$ when $\Input{}$ is clear from context.

Parametric classifiers (such as neural networks) are parameterised with a vector $\theta$ that induces a classifier $\classifier_{\theta}$. The parameter norm of the classifier is $\paramNorm{\classifier_{\theta}} \coloneqq \lVert \theta \rVert$. It is common to add \emph{weight decay} regularisation, which is an additional loss term $\LossWD(\classifier_{\theta}) = \frac{1}{2}(\paramNorm{\classifier_{\theta}})^2$. The overall loss is given by
\begin{equation} \label{eq:loss}
    \Loss(\classifier_{\theta}) = \LossXEnt(\classifier) + \weightDecayHyper \LossWD(\classifier_{\theta}),
\end{equation}
where $\weightDecayHyper$ is a constant that trades off between softmax cross entropy and weight decay.

\paragraph{Circuits.} Inspired by \citet{olah2020zoom}, we use the term \emph{circuit} to refer to an internal mechanism by which a neural network works. We only consider circuits that map inputs to logits, so that a circuit $\circuit{}$ induces a classifier $\classifier_{\circuit{}}$ for the overall task. We elide this distinction and simply write $\circuit{}$ to refer to $\classifier_{\circuit{}}$, so that the logits are $\logit{\Label{}}{\circuit{}}$, the loss is $\Loss(\circuit{})$, and the parameter norm is $\paramNorm{\circuit{}}$.

For any given algorithm, there exist multiple circuits that implement that algorithm. Abusing notation, we use \Gen (\Mem) to refer either to the \emph{family} of circuits that implements the generalising (memorising) algorithm, or a single circuit from the appropriate family.

\section{Three ingredients for grokking} \label{sec:theory}

Given a circuit with perfect training accuracy (as with a pure memorisation approach like \Mem or a perfectly generalising solution like \Gen), the cross entropy loss $\LossXEnt$ incentivises gradient descent to scale up the classifier's logits, as that makes its answers more confident, leading to lower loss (see Theorem~\ref{thm:scaling-logits-decreases-loss}). For typical neural networks, this would be achieved by making the parameters larger. Meanwhile, weight decay $\LossWD$ pushes in the opposite direction, directly decreasing the parameters. These two forces must be balanced at any local minimum of the overall loss.

When we have \emph{multiple} circuits that achieve strong training accuracy, this constraint applies to each individually. But how will they relate to each other? Intuitively, the answer depends on the \emph{efficiency} of each circuit, that is, the extent to which the circuit can convert relatively small parameters into relatively large logits. For more efficient circuits, the $\LossXEnt$ force towards larger parameters is stronger, and the $\LossWD$ force towards smaller parameters is weaker. So, we expect that more efficient circuits will be stronger at any local minimum.

Given this notion of efficiency, we can explain grokking as follows. In the first phase, \Mem is learned quickly, leading to strong train performance and poor test performance. In the second phase, \Gen is now learned, and parameter norm is ``reallocated'' from \Mem to \Gen, eventually leading to a mixture of strong \Gen and weak \Mem, causing an increase in test performance.

This overall explanation relies on the presence of three ingredients:
\begin{enumerate}
    \item \textbf{Generalising circuit:} There are two families of circuits that achieve good training performance: a memorising family \Mem with poor test performance, and a generalising family \Gen with good test performance.
    \item \textbf{Efficiency:} \Gen is more ``efficient'' than \Mem, that is, it can produce equivalent cross-entropy loss on the training set with a lower parameter norm.
    \item \textbf{Slow vs fast learning:} \Gen is learned more slowly than \Mem, such that during early phases of training \Mem is stronger than \Gen.
\end{enumerate}

To illustrate the sufficiency of these ingredients, we construct a minimal example containing all three ingredients, and demonstrate that it leads to grokking. We emphasise that this example is to be treated as a validation of the three ingredients, rather than as a quantitative prediction of the dynamics of existing examples of grokking. Many of the assumptions and design choices were made on the basis of simplicity and analytical tractability, rather than a desire to reflect examples of grokking in practice. The clearest difference is that \Gen and \Mem are modelled as hardcoded input-output lookup tables whose outputs can be strengthened through learned scalar weights, whereas in existing examples of grokking \Gen and \Mem are learned internal mechanisms in a neural network that can be strengthened by scaling up the parameters implementing those mechanisms.

\paragraph{Generalisation.} To model generalisation, we introduce a training dataset $\Dataset$ and a test dataset $\TestDataset$. \Gen is a lookup table that produces logits that achieve perfect train and test accuracy. \Mem is a lookup table that achieves perfect train accuracy, but makes confident incorrect predictions on the test dataset. We denote by $\MemDataset$ the predictions made by \Mem on the test inputs, with the property that there is no overlap between $\TestDataset$ and $\MemDataset$. Then we have:
\begin{align*}
    \logit{\Label{}}{\gen}(\Input{}) &= \indicator{(\Input{}, \Label{}) \in \Dataset\text{ or }(\Input{}, \Label{}) \in \TestDataset} \\
    \logit{\Label{}}{\mem}(\Input{}) &= \indicator{(\Input{}, \Label{}) \in \Dataset\text{ or }(\Input{}, \Label{}) \in \MemDataset}
\end{align*}
\paragraph{Slow vs fast learning.} To model learning, we introduce \emph{weights} for each of the circuits, and use gradient descent to update the weights. Thus, the overall logits are given by:
\begin{equation*}
    \logit{\Label{}}{}(\Input{}) = \weight{\gen}\logit{\Label{}}{\gen}(\Input{}) + \weight{\mem}\logit{\Label{}}{\mem}(\Input{})
\end{equation*}
Unfortunately, if we learn $\weight{\gen}$ and $\weight{\mem}$ directly with gradient descent, we have no control over the \emph{speed} at which the weights are learned. Inspired by \citet{jermyn2022scurves}, we instead compute weights as multiples of two ``subweights'', and then learn the subweights with gradient descent. More precisely, we let $\weight{\gen} = \weight{\gen_1} \weight{\gen_2}$ and $\weight{\mem} = \weight{\mem_1} \weight{\mem_2}$, and update each subweight according to $\weight{i} \leftarrow \weight{i} - \learningRate \cdot \partial{\Loss}/\partial{\weight{i}}$. The speed at which the weights are strengthened by gradient descent can then be controlled by the initial values of the weights. \arxivOnly{Intuitively, the gradient towards the first subweight $\partial{\Loss}/\partial{\weight{1}}$ depends on the strength of the second subweight $\weight{2}$ and vice-versa, and so low initial values lead to slow learning. At initialisation, we set $\weight{\gen_1} = \weight{\mem_1} = 0$ to ensure the logits are initially zero, and then set $\weight{\gen_2} << \weight{\mem_2}$ to ensure \Gen is learned more slowly than \Mem.}

\paragraph{Efficiency.} Above, we operationalised circuit efficiency as the extent to which the circuit can convert relatively small parameters into relatively large logits. When the weights are all one, each circuit produces a one-hot vector as its logits, so their logit scales are the same, and efficiency is determined solely by parameter norm. We define $\paramNorm{\gen}$ and $\paramNorm{\mem}$ to be the parameter norms when weights are all one. Since we want \Gen to be more efficient than \Mem, we set $\paramNorm{\gen} < \paramNorm{\mem}$.

This still leaves the question of how to model parameter norm when the weights are not all one. Intuitively, increasing the weights corresponds to increasing the parameters in a neural network to scale up the resulting outputs. In a $\scalingExp$-layer MLP with Relu activations and without biases, scaling all parameters by a constant $\const$ scales the outputs by $\const^\scalingExp$. Inspired by this observation, we model the parameter norm of $\weight{\gen} \circuit{\gen}$ as $\weight{\gen}^{1/\scalingExp} \paramNorm{i}$ for some $\scalingExp > 0$, and similarly for $\weight{\mem} \circuit{\mem}$.

\begin{figure}[t]
    \centering
    \begin{subfigure}[t]{\textwidth}
        \centering
        \includegraphics[width=0.7\linewidth]{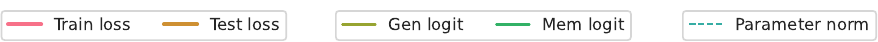}
    \end{subfigure}
    \begin{subfigure}[t]{0.31\textwidth}
        \centering
        \includegraphics[width=\linewidth]{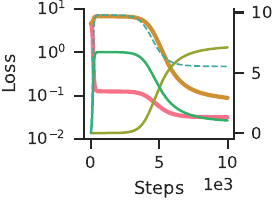}
        \caption{\textbf{All three ingredients.} When \Gen is more efficient than \Mem but learned slower, we observe grokking. \Gen only starts to grow significantly by step 2500, and then substitutes for \Mem. Total parameter norm falls due to \Gen's higher efficiency.
        }
        \label{fig:all-three-ingredients-sim}
    \end{subfigure} \hfill%
    \begin{subfigure}[t]{0.31\textwidth}
        \centering
        \includegraphics[width=\linewidth]{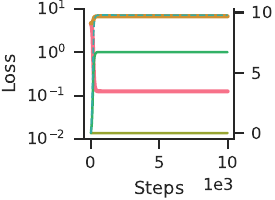}
        \caption{\textbf{\Gen less efficient than \Mem.} We set $\paramNorm{\gen} > \paramNorm{\mem}$. Since \Gen is now less efficient and learned slower, it never grows, and test loss stays high due to \Mem throughout training.}
        \label{fig:not-efficient-sim}
    \end{subfigure} \hfill%
    \begin{subfigure}[t]{0.31\textwidth}
        \centering
        \includegraphics[width=\linewidth]{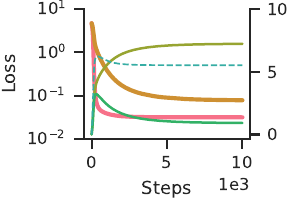}
        \caption{\textbf{\Gen and \Mem learned at equal speeds.} We set $\weight{\gen_1} = \weight{\mem_1}$ so they are learned equally quickly. \Gen is prioritised at least as much as \Mem throughout training, due to its higher efficiency. Thus, test loss is very similar to train loss throughout training, and no grokking is observed.}
        \label{fig:equal-learning-sim}
    \end{subfigure}
    \caption{\textbf{\Gen must be learned slowly for grokking to arise.} We learn weights $\weight{\mem}$ and $\weight{\gen}$ through gradient descent on the loss in Equation~\ref{eq:effective-loss-for-weights}. To model the fact that \Gen is more efficient than \Mem, we set $\paramNorm{\mem} > \paramNorm{\gen}$. We see that we only get grokking when \Gen is learned more slowly than \Mem.}
    \label{fig:sim-result}
\end{figure}

\paragraph{Theoretical analysis.} We first analyse the optimal solutions to the setup above. We can ignore the subweights, as they only affect the speed of learning: $\LossXEnt$ and $\LossWD$ depend only on the weights, not subweights. Intuitively, to get minimal loss, we must assign higher weights to more efficient circuits -- but it is unclear whether we should assign \emph{no} weight to less efficient circuits, or merely smaller but still non-zero weights. Theorem~\ref{thm:efficiency-circuit-weights-logits} shows that in our example, both of these cases can arise: which one we get depends on the value of $\scalingExp$.

\paragraph{Experimental analysis.} We run our example for various hyperparameters, and plot training and test loss in Figure~\ref{fig:sim-result}. We see that when all three ingredients are present (Figure~\ref{fig:all-three-ingredients-sim}), we observe the standard grokking curves, with a delayed decrease in test loss. By contrast, when we make the generalising circuit less efficient (Figure~\ref{fig:not-efficient-sim}), the test loss never falls, and when we remove the slow vs fast learning ingredient (Figure~\ref{fig:equal-learning-sim}), we see that test loss decreases immediately. See Appendix~\ref{app:simulation-details} for details.

\section{Why generalising circuits are more efficient} \label{sec:dataset-size-conceptual}

Section~\ref{sec:theory} demonstrated that grokking can arise when \Gen is more efficient than \Mem, but left open the question of \emph{why} \Gen is more efficient. In this section, we develop a theory based on training dataset size $\numTrainExamples$, and use it to predict two new behaviours: \emph{ungrokking} and \emph{semi-grokking}.

\subsection{Relationship of efficiency with dataset size} \label{sec:efficiency-and-dataset-size}

Consider a classifier $\classifier_{\Dataset}$ obtained by training on a dataset $\Dataset$ of size $\numTrainExamples$ with weight decay, and a classifier $\classifier_{\Dataset'}$ obtained by training on the same dataset with one additional point: $\Dataset' = \Dataset \cup \set{(\Input{}, \LabelTrue{})}$. Intuitively, $\classifier_{\Dataset'}$ cannot be \emph{more} efficient than $\classifier_{\Dataset}$: if it was, then $\classifier_{\Dataset'}$ would outperform $\classifier_{\Dataset}$ even on just $\Dataset$, since it would get similar $\LossXEnt$ while doing better by weight decay. So we should expect that, on average, classifier efficiency is monotonically non-increasing in dataset size.

How does generalisation affect this picture? Let us suppose that $\classifier_{\Dataset}$ successfully generalises to predict $\LabelTrue{}$ for the new input $\Input{}$. Then, as we move from $\Dataset$ to $\Dataset'$, $\LossXEnt(\classifier_{\Dataset})$ likely does not worsen with this new data point. Thus, we could expect to see the same classifier arise, with the same average logit value, parameter norm, and efficiency.

Now suppose $\classifier_{\Dataset}$ instead \emph{fails} to predict the new data point $(\Input{}, \LabelTrue{})$. Then the classifier learned for $\Dataset'$ will likely be \emph{less} efficient: $\LossXEnt(\classifier_{\Dataset})$ would be much higher due to this new data point, and so the new classifier must incur some additional regularisation loss to reduce $\LossXEnt$ on the new point.

Applying this analysis to our circuits, we should expect \Gen's efficiency to remain unchanged as $\numTrainExamples$ increases arbitrarily high, since \Gen does need not to change to accommodate new training examples. In contrast, \Mem must change with nearly every new data point, and so we should expect its efficiency to decrease as $\numTrainExamples$ increases. Thus, when $\numTrainExamples$ is sufficiently large, we expect \Gen to be more efficient than \Mem. (Note however that when the set of possible inputs is small, even the maximal $\numTrainExamples$ may not be ``sufficiently large''.)

\paragraph{Critical threshold for dataset size.} Intuitively, we expect that for extremely small datasets (say, $\numTrainExamples < 5$), it is extremely easy to memorise the training dataset. So, we hypothesise that for these very small datasets, \Mem is more efficient than \Gen. However, as argued above, \Mem will get less efficient as $\numTrainExamples$ increases, and so there will be a critical dataset size $\criticalDatasetSize$ at which \Mem and \Gen are approximately equally efficient. When $\numTrainExamples \gg \criticalDatasetSize$, \Gen is more efficient and we expect grokking, and when $\numTrainExamples \ll \criticalDatasetSize$, \Mem is more efficient and so grokking should not happen.

\paragraph{Effect of weight decay on $\criticalDatasetSize$.} Since $\criticalDatasetSize$ is determined only by the relative efficiencies of \Gen and \Mem, and none of these depends on the exact value of weight decay (just on weight decay being present at all), our theory predicts that $\criticalDatasetSize$ should \emph{not} change as a function of weight decay. Of course, the strength of weight decay may still affect other properties such as the number of epochs till grokking.

\subsection{Implications of crossover: ungrokking and semi-grokking.} \label{sec:ungrokking-semigrokking-conceptual}

By thinking through the behaviour around the critical threshold for dataset size, we predict the existence of two phenomena that, to the best of our knowledge, have not previously been reported.

\paragraph{Ungrokking.} Suppose we take a network that has been trained on a dataset with $\numTrainExamples > \criticalDatasetSize$ and has already exhibited grokking, and continue to train it on a smaller dataset with size $\numTrainExamples' < \criticalDatasetSize$. In this new training setting, \Mem is now more efficient than \Gen, and so we predict that with enough further training gradient descent will reallocate weight from \Gen to \Mem, leading to a transition from high test performance to low test performance. Since this is exactly the opposite observation as in regular grokking, we term this behaviour ``ungrokking''.

Ungrokking can be seen as a special case of catastrophic forgetting~\citep{mccloskey1989catastrophic, ratcliff1990forgetting}, where we can make much more precise predictions. First, since ungrokking should only be expected once $\numTrainExamples' < \criticalDatasetSize$, if we vary $\numTrainExamples'$ we predict that there will be a sharp transition from very strong to near-random test accuracy (around $\criticalDatasetSize$). Second, we predict that ungrokking would arise even if we only remove examples from the training dataset, whereas catastrophic forgetting typically involves training on new examples as well. Third, since $\criticalDatasetSize$ does not depend on weight decay, we predict the amount of ``forgetting'' (i.e. the test accuracy at convergence) also does not depend on weight decay.

\paragraph{Semi-grokking.} Suppose we train a network on a dataset with $\numTrainExamples \approx \criticalDatasetSize$. \Gen and \Mem would be similarly efficient, and there are two possible cases for what we expect to observe (illustrated in Theorem~\ref{thm:efficiency-circuit-weights-logits}).

In the first case, gradient descent would select either \Mem or \Gen, and then make it the maximal circuit. This could happen in a consistent manner (for example, perhaps since \Mem is learned faster it always becomes the maximal circuit), or in a manner dependent on the random initialisation. In either case we would simply observe the presence or absence of grokking.

In the second case, gradient descent would produce a mixture of both \Mem and \Gen. Since neither \Mem nor \Gen would dominate the prediction on the test set, we would expect middling test performance.

\Mem would still be learned faster, and so this would look similar to grokking: an initial phase with good train performance and bad test performance, followed by a transition to significantly improved test performance. Since we only get to middling generalisation unlike in typical grokking, we call this behaviour \emph{semi-grokking}.

Our theory does not say which of the two cases will arise in practice, but in Section~\ref{sec:semigrokking} we find that semi-grokking does happen in our setting.

\section{Experimental evidence} \label{sec:grokking-empirics}

Our explanation of grokking has some support from from prior work: 
\begin{enumerate}
    \item \textbf{Generalising circuit:} \citet[Figure 1]{nanda2023grokking} identify and characterise the generalising circuit learned at the end of grokking in the case of modular addition.
    \item \textbf{Slow vs fast learning:} \citet[Figure 7]{nanda2023grokking} demonstrate ``progress measures'' showing that the generalising circuit develops and strengthens long after the network achieves perfect training accuracy in modular addition.
\end{enumerate}

To further validate our explanation, we empirically test our predictions from Section~\ref{sec:dataset-size-conceptual}:
\begin{enumerate}
    \item[(P1)] \textbf{Efficiency:} We confirm our prediction that \Gen efficiency is independent of dataset size, while \Mem efficiency decreases as training dataset size increases.
    \item[(P2)] \textbf{Ungrokking (phase transition):} We confirm our prediction that ungrokking shows a phase transition around $\criticalDatasetSize$.
    \item[(P3)] \textbf{Ungrokking (weight decay):} We confirm our prediction that the final test accuracy after ungrokking is independent of the strength of weight decay.
    \item[(P4)] \textbf{Semi-grokking:} We demonstrate that semi-grokking occurs in practice.
\end{enumerate}

\paragraph{Training details.} We train 1-layer Transformer models with the AdamW optimiser~\citep{loshchilov2019adamw} on cross-entropy loss (see Appendix~\ref{app:experimental-details} for more details). All results in this section are on the modular addition task ($a + b \mod P$ for $a, b \in (0,\dots,P-1)$ and $P=113$) unless otherwise stated; results on 9 additional tasks can be found in Appendix~\ref{app:experimental-details}.

\subsection{Relationship of efficiency with dataset size} \label{sec:single-circuit-empirical-evidence}

\begin{figure}[t]
    \centering
    \begin{subfigure}[t]{0.48\textwidth}
        \centering
        \includegraphics[width=\linewidth,trim={0 0 8cm 0},clip]{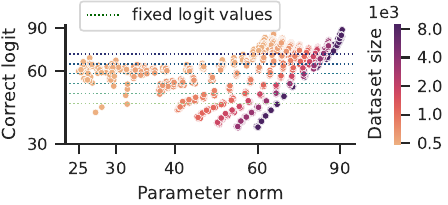}
        \caption{\textbf{\Mem scatter plot.} At a fixed logit value (dotted horizontal lines), parameter norm increases with dataset size.}
        \label{fig:mem-scatter-plot}
    \end{subfigure}  \hfill
    \begin{subfigure}[t]{0.48\textwidth}
        \centering
        \includegraphics[width=\linewidth,trim={8cm 0 0 0},clip]{figure_1_mem_efficiency.pdf}
        \caption{\textbf{\Mem isologit curves.} Curves go up and right, showing that parameter norm increases with dataset size when holding logits fixed.}
        \label{fig:mem-isologits}
    \end{subfigure}
    \begin{subfigure}[t]{0.48\textwidth}
        \centering
        \includegraphics[width=\linewidth,trim={0 0 8cm 0},clip]{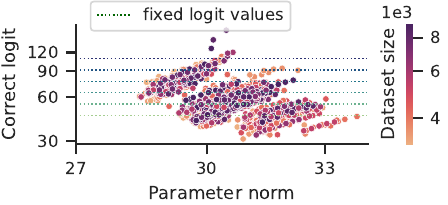}
        \caption{\textbf{\Gen scatter plot.} There is no obvious structure to the colours, suggesting that the logit to parameter norm relationship is independent of dataset size.}
        \label{fig:gen-scatter-plot}
    \end{subfigure}  \hfill
    \begin{subfigure}[t]{0.48\textwidth}
        \centering
        \includegraphics[width=\linewidth,trim={8cm 0 0 0},clip]{figure_1_trig_efficiency.pdf}
        \caption{\textbf{\Gen isologit curves.} The curves are flat, showing that for fixed logit values the parameter norm does not depend on dataset size.}
        \label{fig:gen-isologits}
    \end{subfigure}
    \caption{\textbf{Efficiency of the \Mem and \Gen algorithms.} We collect and visualise a dataset of triples $\left( \logit{\Label{}}{}, \paramNorm{m}, \datasetSize \right)$ (correct logit, parameter norm, and dataset size), each corresponding to a training run with varying random seed, weight decay, and dataset size, for both \Mem and \Gen. Besides a standard scatter plot, we geometrically bucket logit values into six buckets, and plot ``isologit curves'' showing the dependence of parameter norm on dataset size for each bucket. The results validate our theory that (1) \Mem requires larger parameter norm to produce the same logits as dataset size increases, and (2) \Gen uses the same parameter norm to produce fixed logits, irrespective of dataset size. In addition, \Mem has a much wider range of parameter norms than \Gen, and at the extremes can be more efficient than \Gen.}
    \label{fig:efficiencies-mem-gen}
\end{figure}

We first test our prediction about memorisation and generalisation efficiency:

\paragraph{(P1) Efficiency.} We predict (Section~\ref{sec:efficiency-and-dataset-size}) that memorisation efficiency decreases with increasing train dataset size, while generalisation efficiency stays constant.

To test (P1), we look at training runs where only one circuit is present, and see how the logits $\logit{\Label{}}{i}$ vary with the parameter norm $\paramNorm{i}$ (by varying the weight decay) and the dataset size $\datasetSize$.

\paragraph{Experiment setup.} We produce \Mem-only networks by using completely random labels for the training data~\citep{zhang2021understanding}, and assume that the entire parameter norm at convergence is allocated to memorisation. We produce \Gen-only networks by training on large dataset sizes and checking that $>95\%$ of the logit norm comes from just the trigonometric subspace (see Appendix~\ref{app:trig-explanation} for details).

\paragraph{Results.} Figures~\ref{fig:mem-scatter-plot} and~\ref{fig:mem-isologits} confirm our theoretical prediction for memorisation efficiency. Specifically, to produce a fixed logit value, a higher parameter norm is required when dataset size is increased, implying decreased efficiency. In addition, for a fixed dataset size, scaling up logits requires scaling up parameter norm, as expected. Figures~\ref{fig:gen-scatter-plot} and~\ref{fig:gen-isologits} confirm our theoretical prediction for generalisation efficiency. To produce a fixed logit value, the same parameter norm is required irrespective of the dataset size.

Note that the figures show significant variance across random seeds. We speculate that there are many different circuits implementing the same overall algorithm, but they have different efficiencies, and the random initialisation determines which one gradient descent finds. For example, in the case of modular addition, the generalising algorithm depends on a set of ``key frequencies''~\citep{nanda2023grokking}; different choices of key frequencies could lead to different efficiencies.

It may appear from Figure~\ref{fig:gen-scatter-plot} that increasing parameter norm does not increase logit value, contradicting our theory. However, this is a statistical artefact caused by the variance from the random seed. We \emph{do} see ``stripes'' of particular colours going up and right: these correspond to runs with the same seed and dataset size, but different weight decay, and they show that when the noise from the random seed is removed, increased parameter norm clearly leads to increased logits.

\subsection{Ungrokking: overfitting after generalisation} \label{sec:ungrokking}

\begin{figure}[t]
    \centering
    \begin{minipage}{.47\textwidth}
        \centering
        \includegraphics[width=\linewidth]{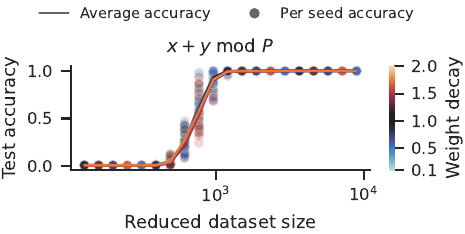}
        \captionof{figure}{\textbf{Ungrokking.} We train on the full dataset (achieving 100\% test accuracy), and then continue training on a smaller subset of the full dataset. We plot test accuracy against reduced dataset size for a range of weight decays. We see a sharp transition from strong test accuracy to near-zero test accuracy, that is independent of weight decay (different coloured lines almost perfectly overlap). See Figure~\ref{fig:ungrokking_many_tasks} for more tasks.}
        \label{fig:ungrokking_addition}
    \end{minipage}\hfill%
    \begin{minipage}{.47\textwidth}
        \centering
        \includegraphics[width=\linewidth]{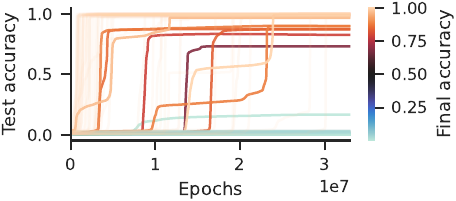}
        \captionof{figure}{\textbf{Semi-grokking.} We plot test accuracy against training epochs for a large sweep of training runs with varying dataset sizes. Lines are coloured by the final test accuracy at the end of training. Out of 200 runs, at least 6 show clear semi-grokking at the end of training. Many other runs show transient semi-grokking, hovering around middling test accuracy for millions of epochs, or having multiple plateaus, before fully generalising.}
        \label{fig:semi_grokking_more_seeds}
    \end{minipage}
\end{figure}

We now turn to testing our predictions about ungrokking. Figure~\ref{fig:ungrokking_accuracy} demonstrates that ungrokking happens in practice. In this section we focus on testing that it has the properties we expect.

\paragraph{(P2) Phase transition.} We predict (Section~\ref{sec:ungrokking-semigrokking-conceptual}) that if we plot test accuracy at convergence against the size of the reduced training dataset $\numTrainExamples'$, there will be a phase transition around $\criticalDatasetSize$.

\paragraph{(P3) Weight decay.} We predict (Section~\ref{sec:ungrokking-semigrokking-conceptual}) that test accuracy at convergence is independent of the strength of weight decay.

\paragraph{Experiment setup.} We train a network to convergence on the full dataset to enable perfect generalisation, then continue training the model on a small subset of the full dataset, and measure the test accuracy at convergence. We vary both the size of the small subset, as well as the strength of the weight decay.

\paragraph{Results.} Figure~\ref{fig:ungrokking_addition} shows the results, and clearly confirms both (P2) and (P3). Appendix~\ref{app:experimental-details} has additional results, and in particular Figure~\ref{fig:ungrokking_many_tasks} replicates the results for many additional tasks.

\subsection{Semi-grokking: evenly matched circuits} \label{sec:semigrokking}

Unlike the previous predictions, semi-grokking is not strictly implied by our theory. However, as we will see, it turns out that it does occur in practice.

\paragraph{(P4) Semi-grokking.} When training at around $\numTrainExamples \approx \criticalDatasetSize$, where \Mem and \Gen have roughly equal efficiencies, the final network at convergence should either be entirely composed of the most efficient circuit, or of roughly equal proportions of \Mem and \Gen. If the latter, we should observe a transition to middling test accuracy well after near-perfect train accuracy.

There are a number of difficulties in demonstrating an example of semi-grokking in practice. First, the time to grok increases super-exponentially as the dataset size $\datasetSize$ decreases~\citep[Figure 1]{power2021grokking}, and $\criticalDatasetSize$ is significantly smaller than the smallest dataset size at which grokking has been demonstrated. Second, the random seed causes significant variance in the efficiency of \Gen and \Mem, which in turn affects $\criticalDatasetSize$ for that run.  Third, accuracy changes sharply with the \Gen to \Mem ratio (Appendix~\ref{app:experimental-details}). To observe a transition to middling accuracy, we need to have balanced \Gen and \Mem outputs, but this is difficult to arrange due to the variance with random seed. To address these challenges, we run many different training runs, on dataset sizes slightly \emph{above} our best estimate of the typical $\criticalDatasetSize$, such that some of the runs will (through random noise) have an unusually inefficient \Gen or an unusually efficient \Mem, such that the efficiencies match and there is a chance to semi-grok.

\paragraph{Experiment setup.} We train 10 seeds for each of 20 dataset sizes evenly spaced in the range $[1500, 2050]$ (somewhat above our estimate of $\criticalDatasetSize$).

\paragraph{Results.} Figure~\ref{fig:semi_grokking_accuracy} shows an example of a single run that demonstrates semi-grokking, and Figure~\ref{fig:semi_grokking_more_seeds} shows test accuracies over time for every run. These validate our initial hypothesis that semi-grokking may be possible, but also raise new questions.

In Figure~\ref{fig:semi_grokking_accuracy}, we see two phenomena peculiar to semi-grokking: (1) test accuracy ``spikes'' several times throughout training before finally converging, and (2) training loss fluctuates in a set range. We leave investigation of these phenomena to future work.

In Figure~\ref{fig:semi_grokking_more_seeds}, we observe that there is often \emph{transient} semi-grokking, where a run hovers around middling test accuracy for millions of epochs, or has multiple plateaus, before generalising perfectly. We speculate that each transition corresponds to gradient descent strengthening a new generalising circuit that is more efficient than any previously strengthened circuit, but took longer to learn. We would guess that if we had trained for longer, many of the semi-grokking runs would exhibit full grokking, and many of the runs that didn't generalise at all would generalise at least partially to show semi-grokking.

Given the difficulty of demonstrating semi-grokking, we only run this experiment on modular addition. However, our experience with modular addition shows that if we only care about values at convergence, we can find them much faster by ungrokking from a grokked network (instead of semi-grokking from a randomly initialised network). Thus the ungrokking results on other tasks (Figure~\ref{fig:ungrokking_many_tasks}) provide some support that we would see semi-grokking on those tasks as well.

\section{Related work}

\paragraph{Grokking.}  Since \citet{power2021grokking} discovered grokking, many works have attempted to understand why it happens. \citet{thilak2022slingshot} suggest that ``slingshots'' could be responsible for grokking, particularly in the absence of weight decay, and \citet{notsawo2023predicting} discuss a similar phenomenon of ``oscillations''. In contrast, our explanation applies even where there are no slingshots or oscillations (as in most of our experiments). \citet{liu2022towards} show that for a specific (non-modular) addition task with an inexpressive architecture, perfect generalisation occurs when there is enough data to determine the appropriate structured representation. However, such a theory does not explain semi-grokking. We argue that for more typical tasks on which grokking occurs, the critical factor is instead the relative efficiencies of \Mem and \Gen. \citet{liu2023omnigrok} show that grokking occurs even in non-algorithmic tasks when parameters are initialised to be very large, because memorisation happens quickly but it takes longer for regularisation to reduce parameter norm to the ``Goldilocks zone'' where generalisation occurs. This observation is consistent with our theory: in non-algorithmic tasks, we expect there exist efficient, generalising circuits, and the increased parameter norm creates the final ingredient (slow learning), leading to grokking. However, we expect that in algorithmic tasks such as the ones we study, the slow learning is caused by some factor other than large parameter norm at initialisation.
    
\citet{davies2023unifying} is the most related work. The authors identify three ingredients for grokking that mirror ours: a \emph{generalising} circuit that is \emph{slowly learned} but is \emph{favoured by inductive biases}, a perspective also mirrored in \citet[Appendix E]{nanda2023grokking}. We operationalise the ``inductive bias'' ingredient as \emph{efficiency} at producing large logits with small parameter norm, and to provide significant empirical support by predicting and verifying the existence of the critical threshold $\criticalDatasetSize$ and the novel behaviours of semi-grokking and ungrokking.
    
\citet{nanda2023grokking} identify the trigonometric algorithm by which the networks solve modular addition after grokking, and show that it grows smoothly over training, and \citet{chughtai2023grokking} extend this result to arbitrary group compositions. We use these results to define metrics for the strength of the different circuits (Appendix~\ref{app:trig-explanation}) which we used in preliminary investigations and for some results in Appendix~\ref{app:experimental-details} (Figures~\ref{fig:grokking_story} and~\ref{fig:trig_to_mem}). \citet{merrill2023tale} show similar results on sparse parity: in particular, they show that a sparse subnetwork is responsible for the well-generalising logits, and that it grows as grokking happens.

\paragraph{Weight decay.} While it is widely known that weight decay can improve generalisation~\citep{krogh1991simple}, the mechanisms for this effect are multifaceted and poorly understood~\citep{zhang2018three}. We propose a mechanism that is, to the best of our knowledge, novel: generalising circuits tend to be more efficient than memorising circuits at large dataset sizes, and so weight decay preferentially strengthens the generalising circuits.

\paragraph{Understanding deep learning through circuit-based analysis.} One goal of \emph{interpretability} is to understand the internal mechanisms by which neural networks exhibit specific behaviours\arxivOnly{, often through the identification and characterisation of specific parts of the network, especially computational subgraphs (``circuits'') that implement human-interpretable algorithms}~\citep{olah2020zoom, elhage2021mathematical, erhan2009visualizing, meng2022locating, cammarata2021curve, wang2022interpretability, li2022emergent, geva2020transformer}. Such work can also be used to understand deep learning.

\citet{olsson2022context} explain a phase change in the training of language models by reference to \emph{induction heads}, a family of circuits that produce in-context learning. In concurrent work, \citet{singh2023transient} show that the in-context learning from induction heads is later replaced by in-weights learning in the absence of weight decay, but remains strong when weight decay is present. We hypothesise that this effect is also explained through circuit efficiency: the in-context learning from induction heads is a generalising algorithm and so is favoured by weight decay given a large enough dataset size.

\citet{michaud2023quantization} propose an explanation for power-law scaling~\citep{hestness2017deep, kaplan2020scaling, barak2022hidden} based on a model in which there are many discrete quanta (algorithms) and larger models learn more of them. Our explanation involves a similar structure: we posit the existence of two algorithms (quanta), and analyse the resulting training dynamics.

\section{Discussion}

\paragraph{Rethinking generalisation.} \citet{zhang2021understanding} pose the question of why deep neural networks achieve good generalisation even when they are easily capable of memorising a random labelling of the training data. Our results gesture at a resolution: in the presence of weight decay, circuits that generalise well are likely to be more efficient given a large enough dataset, and thus are preferred over memorisation circuits, even when both achieve perfect training loss (Section~\ref{sec:efficiency-and-dataset-size}). Similar arguments may hold for other types of regularisation as well.

\paragraph{Necessity of weight decay.} The biggest limitation of our explanation is that it relies crucially on weight decay, but grokking has been observed even when weight decay is not present~\citep{power2021grokking, thilak2022slingshot} (though it is slower and often much harder to elicit~\citep[Appendix D.1]{nanda2023grokking}). This demonstrates that our explanation is incomplete.

Does it also imply that our explanation is \emph{incorrect}? We do not think so. We speculate there is at least one other effect that has a similar regularising effect favouring \Gen over \Mem, such as the implicit regularisation of gradient descent~\citep{soudry2018implicit, lyu2019gradient, wang2021implicit, smith2017bayesian}, and that the speed of the transition from \Mem to \Gen is based on the \emph{sum} of these effects and the effect from weight decay. This would neatly explain why grokking takes longer as weight decay decreases~\citep{power2021grokking}, and does not completely vanish in the absence of weight decay. Given that there is a potential extension of our theory that explains grokking without weight decay, and the significant confirming evidence that we have found for our theory in settings with weight decay, we are overall confident that our explanation is at least one part of the true explanation when weight decay is present.

\paragraph{Broader applicability: beyond parameter norm.} Another limitation is that we only consider one kind of constraint that gradient descent must navigate: parameter norm. Typically, there are many other constraints: fitting the training data, capacity in ``bottleneck activations''~\citep{elhage2021mathematical}, interference between circuits~\citep{elhage2022superposition}, and more. This may limit the broader applicability of our theory, despite its success in explaining grokking.

\paragraph{Broader applicability: realistic settings.} A natural question is what implications our theory has for more realistic settings. We expect that the general concepts of circuits, efficiency, and speed of learning continue to apply. However, in realistic settings, good training performance is typically achieved when the model has many different circuit families that contribute different aspects (e.g. language modelling requires spelling, grammar, arithmetic, etc). We expect that these will have a wide variety of learning speeds and efficiencies (although note that ``efficiency'' is not as well defined in this setting, because the circuits don't get perfect training accuracy).

In contrast, the key property for grokking in ``algorithmic'' tasks like modular arithmetic is that there are two clusters of circuit families -- one slowly learned, efficient, generalising cluster, and one quickly learned, inefficient, memorising cluster. In particular, our explanation relies on there being no circuits in between the two clusters. Therefore we observe a sharp transition in test performance when shifting from the memorising to the generalising cluster.

\paragraph{Future work.} Within grokking, several interesting puzzles are still left unexplained. Why does the time taken to grok rise super-exponentially as dataset size decreases? How does the random initialisation interact with efficiency to determine which circuits are found by gradient descent? What causes generalising circuits to develop slower? Investigating these puzzles is a promising avenue for further work.

While the direct application of our work is to understand the puzzle of grokking, we are excited about the potential for understanding deep learning more broadly through the lens of circuit efficiency. We would be excited to see work looking at the role of circuit efficiency in more realistic settings, and work that extends circuit efficiency to consider other constraints that gradient descent must navigate.

\section{Conclusion}

The central question of our paper is: in grokking, why does the network's test performance improve dramatically upon continued training, having already achieved nearly perfect training performance? Our explanation is: the generalising solution is more ``efficient'' but slower to learn than the memorising solution. After quickly learning the memorising circuit, gradient descent can still decrease loss even further by simultaneously strengthening the efficient, generalising circuit and weakening the inefficient, memorising circuit. 

Based on our theory we predict and demonstrate two novel behaviours: \emph{ungrokking}, in which a model that has perfect generalisation returns to memorisation when it is further trained on a dataset with size smaller than the critical threshold, and \emph{semi-grokking}, where we train a randomly initialised network on the critical dataset size which results in a grokking-like transition to middling test accuracy. Our explanation is the only one we are aware of that has made (and confirmed) such surprising advance predictions, and we have significant confidence in the explanation as a result.

\cameraOnly{
\section*{Acknowledgements}

Thanks to Paul Christiano, Xander Davies, Seb Farquhar, Geoffrey Irving, Tom Lieberum, Eric Michaud, Vlad Mikulik, Neel Nanda, Jonathan Uesato, and anonymous reviewers for valuable discussions and feedback.
}

\bibliography{references}

\newpage

\appendix

\section{Experimental details and more evidence}
\label{app:experimental-details}

For all our experiments, we use 1-layer decoder-only transformer networks~\citep{vaswani2017transformers} with learned positional embeddings, untied embeddings/unembeddings, The hyperparameters are as follows: $d_\text{model} = 128$ is the residual stream width, $d_\text{head} = 32$ is the size of the query, key, and value vectors for each attention head, $d_\text{mlp} = 512$ is the number of neurons in the hidden layer of the MLP, and we have $d_\text{model}/d_\text{head} = 4$ heads per self-attention layer. We optimise the network with full batch training (that is, using the entire training dataset for each update) using the AdamW optimiser~\citep{loshchilov2019adamw} with $\beta_1=0.9$, $\beta_2=0.98$, learning rate of $10^{-3}$, and weight decay of $1.0$. In some of our experiments we vary the weight decay in order to produce networks with varying parameter norm.

Following \cite{power2021grokking}, for a binary operation $x \circ y$, we construct a dataset of the form $\token{x}\token{\circ}\token{y}\token{=}\token{x \circ y}$, where $\token{a}$ stands for the token corresponding to the element $a$. We choose a fraction of this dataset at random as the train dataset, and the remainder as the test dataset. The first 4 tokens $\token{x}\token{\circ}\token{y}\token{=}$ are the input to the network, and we train with cross-entropy loss over the final token $\token{x \circ y}$. For all modular arithmetic tasks we use the modulus $p = 113$, so for example the size of the full dataset for modular addition is $p^2 = 12769$, and $d_\text{vocab} = 115$, including the $\token{+}$ and $\token{=}$ tokens.

\subsection{Semi-grokking}

\begin{figure}
    \centering
    \includegraphics[width=\linewidth]{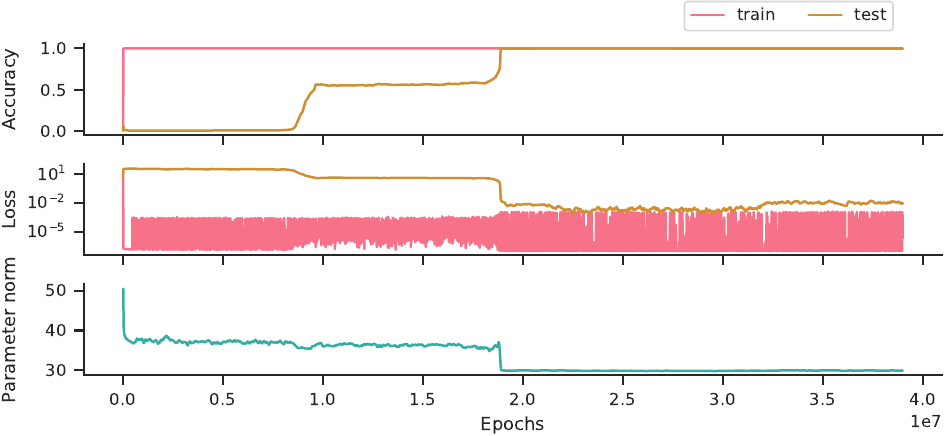}
    \caption{\textbf{Examining a single semi-grokking run in detail.} We plot accuracy, loss, and parameter norm over training for a single cherry-picked modular addition run at a dataset size of 1532 (12\% of the full dataset). This run shows transient semi-grokking. At epoch $0.8 \times 10^7$, test accuracy rises to around $0.55$, and then stays there for $10^7$ epochs, because \Gen and \Mem efficiencies are balanced. At epoch $1.8 \times 10^7$, we speculate that gradient descent finds an even more efficient \Gen circuit, as parameter norm drops suddenly and test accuracy rises to 1. At epoch $3.2 \times 10^7$ we see test loss \textit{rise}, we do not know why. There seem to be multiple phases, perhaps corresponding to the network transitioning between mixtures of multiple circuits with increasing efficiencies, but further investigation is needed.}
    \label{fig:semi_grokking_story}
\end{figure}

In Section~\ref{sec:semigrokking} we looked at all semi-grokking training runs in Figure~\ref{fig:semi_grokking_more_seeds}. Here, we investigate a single example of transient semi-grokking in more detail (see Figure~\ref{fig:semi_grokking_story}). We speculate that there are multiple circuits with increasing efficiencies for \Gen, and in these cases the more efficient circuits are slower to learn. This would explain transient semi-grokking: gradient descent first finds a less efficient \Gen and we see middling generalisation, but since we are using the upper range of $\criticalDatasetSize$, eventually gradient descent finds a more efficient \Gen leading to full generalisation.

\subsection{Ungrokking}

In Figure~\ref{fig:ungrokking_more_seeds}, we show many ungrokking runs for modular addition, and in Figure~\ref{fig:ungrokking_many_tasks} we show ungrokking across many other tasks.

We have already seen that $\criticalDatasetSize$ is affected by the random initialisation. It is interesting to compare $\criticalDatasetSize$ when starting with a given random initialisation, and when ungrokking from a network that was trained to full generalisation with the same random initialisation. Figure~\ref{fig:semi_grokking_more_seeds} shows a semi-grokking run that achieves a test accuracy of $\sim$0.7 with a dataset size of $\sim$2000, while Figure~\ref{fig:ungrokking_more_seeds} shows ungrokking runs that achieve a test accuracy of $\sim$0.7 with a dataset size of around 800--1000, less than half of what the semi-grokking run required.

In Figure~\ref{fig:ungrokking_accuracy_vs_dataset_size}, the final test accuracy after \emph{ungrokking} shows a smooth relationship with dataset size, which we might expect if \Gen is getting stronger on a smoothly increasing number of inputs compared to \Mem. However due to the difficulties discussed previously, we don't see a smooth relationship between test accuracy and dataset size in \emph{semi-grokking}.

These results suggest that $\criticalDatasetSize$ is an oversimplified concept, because in reality the initialisation and training dynamics affect which circuits are found, and therefore the dataset size at which we see middling generalisation.

\begin{figure}
    \centering
    \includegraphics[width=\linewidth]{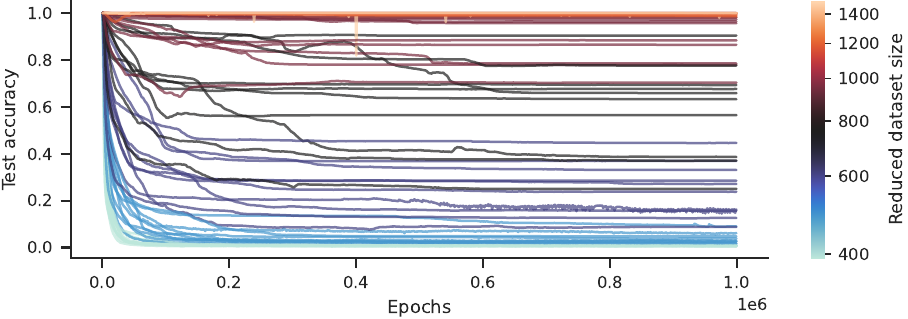}
    \caption{\textbf{Many ungrokking runs.} We show test accuracy over epochs for a range of ungrokking runs for modular addition. Each line represents a single run, and we sweep over 7 geometrically spaced dataset sizes in $[390, 1494]$ with 10 seeds each. Each run is initialised with parameters from a network trained on the full dataset (the initialisation runs are not shown), so test accuracy starts at 1 for all runs. When the dataset size is small enough, the network ungroks to poor test accuracy, while train accuracy remains at 1 (not shown). For an intermediate dataset size, we see ungrokking to middling test accuracy as \Gen and \Mem efficiencies are balanced.}
    \label{fig:ungrokking_more_seeds}
\end{figure}

\begin{figure}
    \centering
    \includegraphics[width=\linewidth]{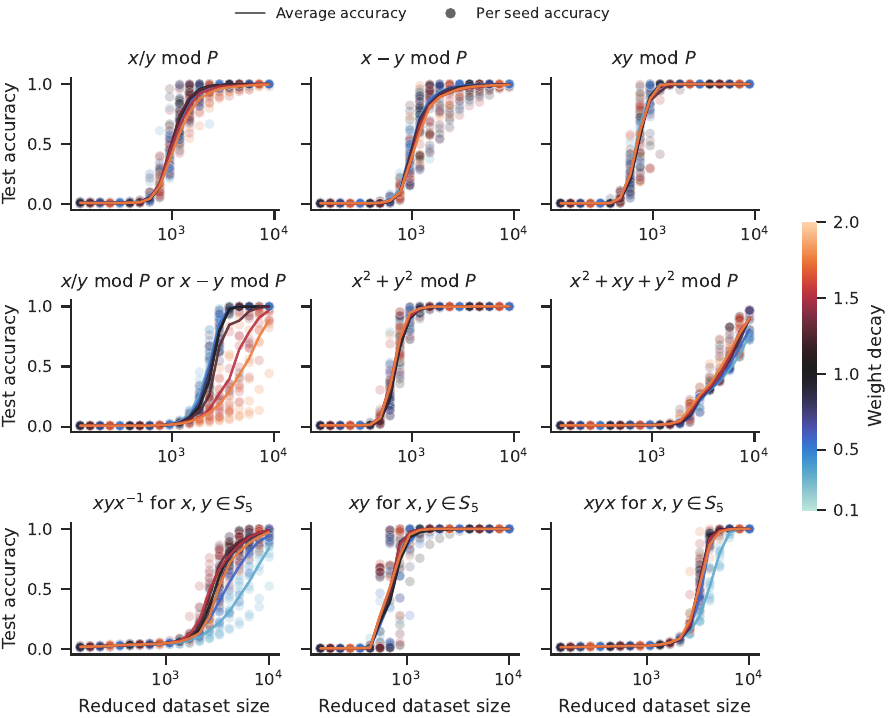}
    \caption{\textbf{Ungrokking on many other tasks.} We plot test accuracy against reduced dataset size for many other modular arithmetic and symmetric group tasks~\citep{power2021grokking}. For each run, we train on the full dataset (achieving 100\% accuracy), and then further train on a reduced subset of the dataset for 100k steps. The results show clear ungrokking, since in many cases test accuracy falls below 100\%, often to nearly 0\%. For most datasets the transition point is independent of weight decay (different coloured lines almost perfectly overlap).}
    \label{fig:ungrokking_many_tasks}
\end{figure}

\subsection{\Gen and \Mem development during grokking}

In Figure~\ref{fig:grokking_story} we show \Gen and \Mem development via the proxy measures defined in Appendix~\ref{app:trig-explanation} for a randomly-picked grokking run. Looking at these measures was very useful to form a working theory for why grokking happens. However as we note in Appendix~\ref{app:trig-explanation}, these proxy measures tend to overestimate \Gen and underestimate \Mem.

We note some interesting phenomena in Figure~\ref{fig:grokking_story}:
\begin{enumerate}
    \item Between epochs 200 to 1500, \emph{both} the \Gen and \Mem logits are rising while parameter norm is falling, indicating that gradient descent is improving efficiency (possibly by removing irrelevant parameters).
    \item After epoch 4000, the \Gen logit \emph{falls} while the \Mem logit is already $\sim$0. Since test loss continues to fall, we expect that incorrect logits from \Mem on the test dataset are getting cleaned up, as described in \citet{nanda2023grokking}.
\end{enumerate}

\begin{figure}
    \centering
    \includegraphics[width=\linewidth]{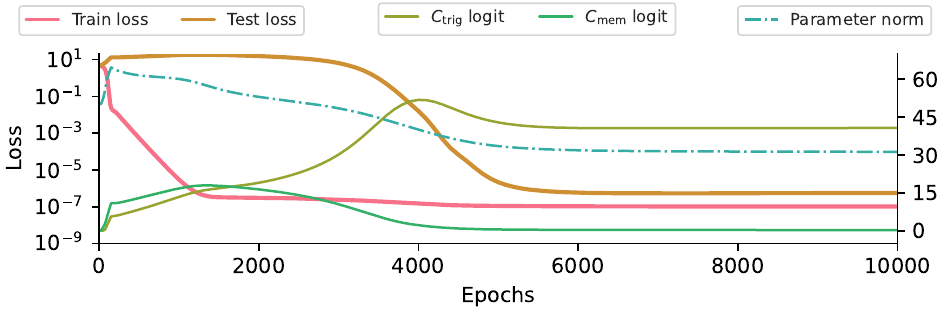}
    \caption{\textbf{Grokking occurs because \Gen is more efficient than \Mem.} We show loss, parameter norm, and the value of the correct logit for \Gen and \Mem for a randomly-picked training run. By step 200, the train accuracy is already perfect (not shown), train loss is low while test loss has risen, and parameter norm is at its maximum value, indicating strong \Mem. Train loss continues to fall rapidly until step 1500, as parameter norm falls and the \Gen logit becomes higher than the \Mem logit. At step 3500, test loss starts to fall as the high \Gen logit starts to dominate, and by step 6000 we get good generalisation.}
    \label{fig:grokking_story}
\end{figure}

\begin{figure}[t]
    \centering
    \begin{subfigure}[t]{0.43\textwidth}
        \centering
        \includegraphics[width=\linewidth]{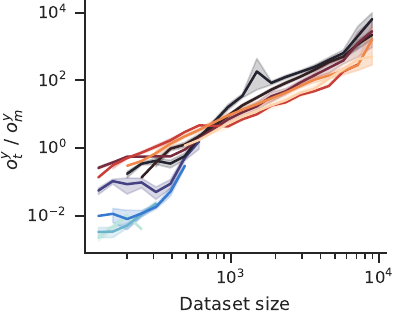}
        \caption{\textbf{Logit ratio ($\logit{\Label{}}{t}/\logit{\Label{}}{m})$ vs dataset size ($\datasetSize$).} Colours correspond to different bucketed values of parameter norm ($\paramNorm{}$). Each line shows that as dataset size increases, a fixed parameter norm (fixed colour) is being reallocated smoothly towards increasing the trigonometric logit compared to the memorisation logit.}
        \label{fig:ttm_ratio_vs_dataset_size}
    \end{subfigure} \hfill
    \begin{subfigure}[t]{0.39\textwidth}
        \centering
        \includegraphics[width=\linewidth]{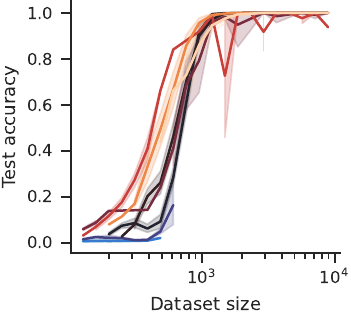}
        \caption{\textbf{Test accuracy vs dataset size ($\datasetSize$).} We see a smooth dependence on dataset size. Each line shows that as dataset size increases, the reallocation of a fixed parameter norm (fixed colour) towards \Gen from \Mem results in increasing accuracy.}
        \label{fig:ungrokking_accuracy_vs_dataset_size}
    \end{subfigure} \hfill
    \begin{subfigure}[t]{0.14\textwidth}
        \centering
        \vspace{-5.5cm}
        \includegraphics[width=\linewidth,trim={1.5cm 0 0 0},clip]{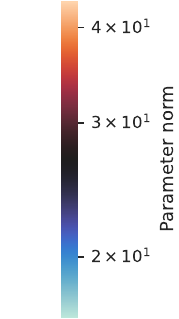}
    \end{subfigure}
    \caption{\textbf{Relative strength at convergence.} We report logit ratios and test accuracy at convergence across a range of training runs, generated by sweeping over the weight decay and random seed to obtain different parameter norms at the same dataset size. We use the ungrokking runs from Figure~\ref{fig:ungrokking_more_seeds}, so every run is initialised with parameters obtained by training on the full dataset.}
    \label{fig:trig_to_mem}
\end{figure}

\subsection{Tradeoffs between \Gen and \Mem} \label{sec:both-circuits-empirical-evidence}

In Section~\ref{sec:efficiency-and-dataset-size} we looked at the efficiency of \Gen-only and \Mem-only circuits. In this section we train on varying dataset sizes so that the network develops a mixture of \Gen and \Mem circuits, and study their relative strength using the correct logit as a proxy measure (described in Appendix~\ref{app:trig-explanation}).

As we demonstrated previously, \Mem's efficiency drops with increasing dataset size, while \Gen's stays constant. Theorem~\ref{thm:efficiency-circuit-weights-logits} (case 2) suggests that parameter norm allocated to a circuit is proportional to efficiency, and since logit values also increase with parameter norm, this implies that the ratio of the \Gen to \Mem logit $\logit{\Label{}}{t}/\logit{\Label{}}{m}$ should increase monotonically with dataset size.

In Figure~\ref{fig:ttm_ratio_vs_dataset_size} we see exactly this: the logit ratio changes monotonically (over 6 orders of magnitude) with increasing dataset size.

Due to the difficulties in training to convergence at small dataset sizes, we initialised all parameters from a \Gen-only network trained on the full dataset. We confirmed that in all the runs, at convergence, the training loss from the \Gen-initialised network was lower than the training loss from a randomly initialised network, indicating that this initialisation allows our optimiser to find a better minimum than from random initialisation.

\section{\Gen and \Mem in modular addition} \label{app:trig-explanation}
    In modular addition, given two integers $a, b$ and a modulus $\modulus$ as input, where $0 \le a,b < \modulus$, the task is to predict $a + b \mod \modulus$. \citet{nanda2023grokking} identified the generalising algorithm implemented by a 1-layer transformer after grokking (visualised in Figure~\ref{fig:trig_explanation}), which we call the ``trigonometric'' algorithm. In this section we summarise the algorithm, and explain how we produce our proxy metrics for the strength of \Gen and \Mem.

    \paragraph{Trigonometric logits.} We explain the structure of the logits produced by the trigonometric algorithm. For each possible label $\ModAddLabel \in \{0, 1, \dots \modulus-1\}$, the trigonometric logit $\logit{\ModAddLabel}{}$ will be given by $\sum_{\omega_k} \cos(\omega_k(a + b - \ModAddLabel))$, for a few key frequencies $\omega_k = 2\pi \frac{k}{\modulus}$ with integer $k$. For the true label $\ModAddLabelTrue = a + b \mod \modulus$, the term $\omega_k(a + b - \ModAddLabelTrue)$ is an integer multiple of $2\pi$, and so $\cos(\omega_k(a + b - \ModAddLabelTrue)) = 1$. For any incorrect label $\ModAddLabel \neq a + b \mod \modulus$, it is very likely that at least \emph{some} of the key frequencies satisfy $\cos(\omega_k(a + b - \ModAddLabel)) \ll 1$, creating a large difference between $\logit{\ModAddLabel}{}$ and $\logit{\ModAddLabelTrue}{}$.

    \paragraph{Trigonometric algorithm.}
    \begin{figure}[t]
        \centering
        \includegraphics[width=\textwidth]{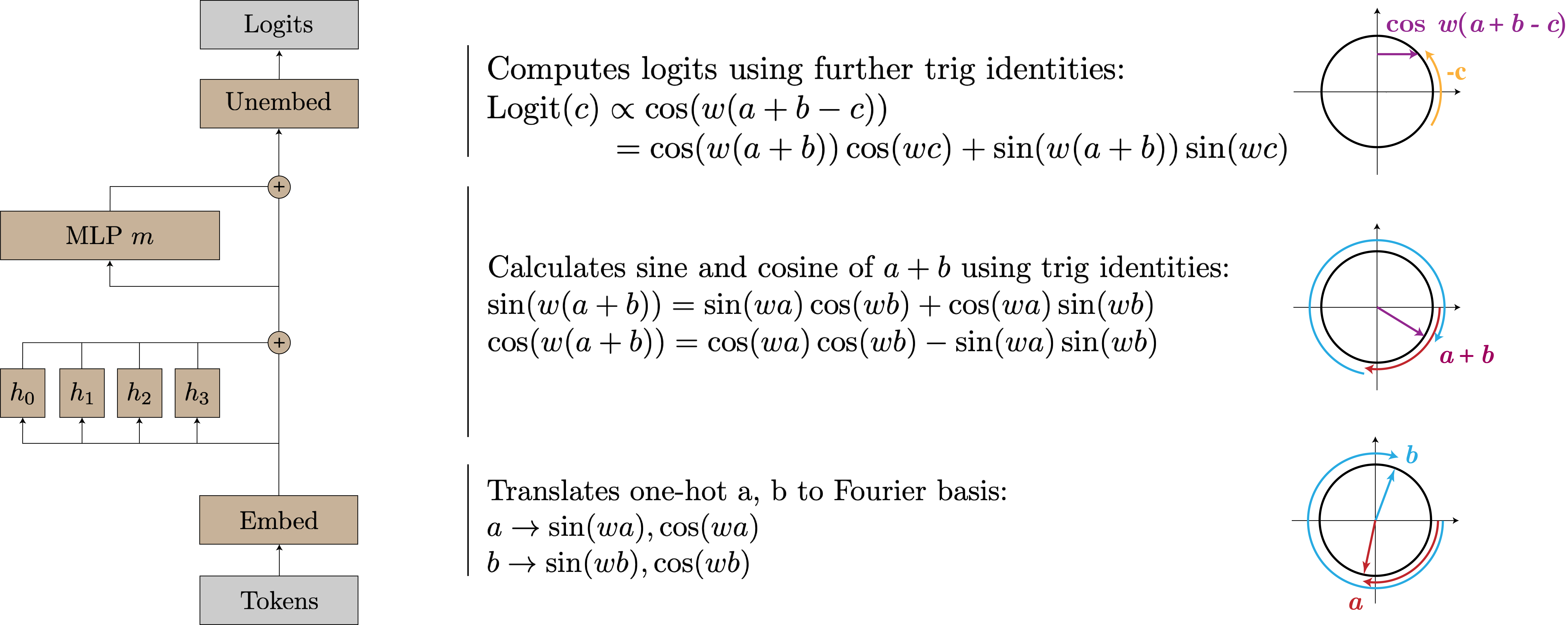}
        \caption{\textbf{The trigonometric algorithm for modular arithmetic} (reproduced from~\citet{nanda2023grokking}). Given two numbers $a$ and $b$, the model projects each point onto a corresponding rotation using its embedding matrix. Using its attention and MLP layers, it then composes the rotations to get a representation of $a + b \mod \modulus$. Finally, it ``reads off'' the logits for each $c \in \{0, 1, \dots, \modulus - 1\}$, by rotating by $-c$ to get $cos(\omega(a + b - c))$, which is maximised when $a + b \equiv c \mod P$ (since $\omega$ is a multiple of $2\pi$).}
        \label{fig:trig_explanation}
    \end{figure}

    There is a set of key frequencies $\omega_k$. (These frequencies are typically whichever frequencies were highest at the time of random initialisation.) For an arbitrary label $\ModAddLabel$, the logit $\logit{\ModAddLabel}{}$ is computed as follows:
    \begin{enumerate}
        \item Embed the one-hot encoded number $a$ to $\sin(\omega_{k}a)$ and $\cos(\omega_{k}a)$ for the various frequencies $\omega_k$. Do the same for $b$.
        \item Compute $\cos(\omega_{k}(a + b))$ and $\sin(\omega_{k}(a + b))$ using the intermediate attention and MLP layers via the trigonometric identities:
        \begin{align*}
            \cos(\omega_{k}(a + b)) = \cos(\omega_{k}a) \cos(\omega_{k}a) - \sin(\omega_{k}a) \sin(\omega_{k}b)\\
            \sin(\omega_{k}(a + b)) = \sin(\omega_{k}a) \cos(\omega_{k}b) + \cos(\omega_{k}a) \sin(\omega_{k}b)
        \end{align*}
        \item Use the output and unembedding matrices to implement the trigonometric identity:
        \begin{align*}
            \logit{\ModAddLabel}{} = \sum_{\omega_k} \cos(\omega_{k}(a + b - \ModAddLabel)) = \sum_{\omega_k} \cos(\omega_{k}(a + b)) \cos(\omega_{k}\ModAddLabel) + \sin(\omega_{k}(a + b)) \sin(\omega_{k}\ModAddLabel).
        \end{align*}
    \end{enumerate} 

    \paragraph{Isolating trigonometric logits.} Given a classifier $\classifier$, we can aggregate its logits on every possible input, resulting in a vector $\vec{Z}_{\classifier}$ of length $\modulus^3$ where $\vec{Z}_{\classifier}^{a,b,\ModAddLabel} = \logit{\ModAddLabel}{\classifier}(\text{``}a\texttt{ + }b\texttt{ =}\text{''})$ is the logit for label $\ModAddLabel$ on the input $(a, b)$. We are interested in identifying the contribution of the trigonometric algorithm to $\vec{Z}_{\classifier}$. We use the same method as \cite{chughtai2023grokking} and restrict $\vec{Z}_{\classifier}$ to a much smaller trigonometric subspace.

    For a frequency $\omega_k$, let us define the $\modulus^3$-dimensional vector $\vec{Z}_{\omega_k}$ as $\vec{Z}^{a,b,\ModAddLabel}_{\omega_k} = \cos(\omega_{k}(a + b - \ModAddLabel))$. Since $\vec{Z}_{\omega_k} = \vec{Z}_{\omega_{\modulus - k}}$, we set $1 \le k \le K$, where $K = \lceil(\modulus - 1) / 2\rceil$, to obtain $K$ distinct vectors, ignoring the constant bias vector. These vectors are orthogonal, as they are part of a Fourier basis.
    
    Notice that any circuit that was exactly following the learned algorithm described above would only produce logits in the directions $\vec{Z}_{\omega_k}$ for the key frequencies $\omega_k$. So, we can define the trigonometric contribution to $\vec{Z}_{\classifier}$ as the projection of $\vec{Z}_{\classifier}$ onto the directions $\vec{Z}_{\omega_k}$. We may not know the key frequencies in advance, but we can sum over all $K$ of them, giving the following definition for trigonometric logits:
    \begin{align*}
      \vec{Z}_{\classifier, T} = \sum_{k=1}^{K} (\vec{Z}_{\classifier} \cdot \hat{Z}_{\omega_k}) \hat{Z}_{\omega_k}
    \end{align*}
    where $\hat{Z}_{\omega_k}$ is the normalised version of $\vec{Z}_{\omega_k}$. This corresponds to projecting onto a $K$-dimensional subspace of the $\modulus^3$-dimensional space in which $\vec{Z}_{\classifier}$ lives.

    \paragraph{Memorisation logits.} Early in training, neural networks memorise the training dataset without generalising, suggesting that there exists a memorisation algorithm, implemented by the circuit \Mem\footnote{In reality, there are at least two different memorisation algorithms: commutative memorisation (which predicts the same answer for $(a, b)$ and $(b, a)$) and non-commutative memorisation (which does not). However, this difference does not matter for our analyses, and we will call both of these ``memorisation'' in this paper.}. Unfortunately, we do not understand the algorithm underlying memorisation, and so cannot design a similar procedure to isolate \Mem's contribution to the logits. However, we hypothesise that for modular addition, \Gen and \Mem are the only two circuit families of importance for the loss. This allows us to define the \Mem contribution to the logits as the residual:
    \begin{align*}
      \vec{Z}_{\classifier, M} = \vec{Z}_{\classifier} - \vec{Z}_{\classifier, T}    
    \end{align*}

    \paragraph{\Trig and \Mem circuits.} We say that a circuit is a \Trig circuit if it implements the \Trig algorithm, and similarly for \Mem circuits. Importantly, this is a many-to-one mapping: there are many possible circuits that implement a given algorithm.

    We isolate \Trig ($\logits{t}$) and \Mem ($\logits{m}$) logits by projecting the output logits ($\logits{}$) as described in Appendix~\ref{app:trig-explanation}. We cannot directly measure the circuit weights $\weight{t}$ and $\weight{m}$, but instead use an indirect measure: the value of the logit for the correct class given by each circuit, i.e. $\logit{\Label{}}{t}$ and $\logit{\Label{}}{m}$.
    
    \paragraph{Flaws} These metrics should be viewed as an imperfect proxy measure for the true strength of the trigonometric and memorisation circuits, as they have a number of flaws:
    \begin{enumerate}
        \item When both \Trig and \Mem are present in the network, they are both expected to produce high values for the correct logits, and low values for incorrect logits, on the train dataset. Since the \Trig and \Mem logits are correlated, it becomes more likely that $\vec{Z}_{\classifier, T}$ captures \Mem logits too.
        \item In this case we would expect our proxy measure to overestimate the strength of \Trig and underestimate the strength of \Mem. In fact, in our experiments we do see large negative correct logit values for \Mem on training for semi-grokking, which probably arises because of this effect.
        \item Logits are not inherently meaningful; what matters for loss is the extent to which the correct logit is larger than the incorrect logits. This is not captured by our proxy metric, which only looks at the size of the correct logit. In a binary classification setting, we could instead use the difference between the correct and incorrect logit, but it is not clear what a better metric would be in the multiclass setting.
    \end{enumerate}

\section{Details for the minimal example}
\label{app:simulation-details}

In Figure~\ref{fig:sim-result} we show that two ingredients: multiple circuits with different efficiencies, and slow and fast circuit development, are sufficient to reproduce learning curves that qualitatively demonstrate grokking. In Table~\ref{tab:hypers-simulation} we provide details about the simulation used to produce this figure.

As explained in Section~\ref{sec:theory}, the logits produced by \Gen and \Mem are given by:

\begin{align}
    \logit{\Label{}}{\gen}(\Input{}) &= \indicator{(\Input{}, \Label{}) \in \Dataset\text{ or }(\Input{}, \Label{}) \in \TestDataset} \label{eq:sim-gen-logits}\\
    \logit{\Label{}}{\mem}(\Input{}) &= \indicator{(\Input{}, \Label{}) \in \Dataset\text{ or }(\Input{}, \Label{}) \in \MemDataset} \label{eq:sim-mem-logits}
\end{align}

These are scaled by two independent weights for each circuit, giving the overall logits as:
\begin{equation}\label{eq:sim-overall-logits}
    \logit{\Label{}}{}(\Input{}) = \weight{\gen_1}\weight{\gen_2}\logit{\Label{}}{\gen}(\Input{}) + \weight{\mem_1}\weight{\mem_2}\logit{\Label{}}{\mem}(\Input{})
\end{equation}

We model the parameter norms according to the scaling efficiency in Section~\ref{sec:efficiency-minimal-example}, inspired by a $\scalingExp$-layer MLP with Relu activations and without biases:
\begin{align*}
    \circuitNorm{c} = (\weight{c_1}\weight{c_2})^{1/\scalingExp}\paramNorm{c}~\text{for}~c \in (g, m).
\end{align*}

From \Crefrange{eq:sim-gen-logits}{eq:sim-overall-logits} we get the following equations for train and test loss respectively:
\begin{align*}
    \Loss_{\textrm{train}} &= - \log \frac{\exp(\weight{g_1} \weight{g_2} + \weight{m_1} \weight{m_2})}{(q-1) +\exp(\weight{g_1} \weight{g_2} + \weight{m_1} \weight{m_2})} + \LossWD,
    \\
    \Loss_{\textrm{test}} &= - \log \frac{\exp(\weight{g_1} \weight{g_2})}{(q-2) +\exp(\weight{g_1} \weight{g_2}) + \exp(\weight{m_1} \weight{m_2})} + \LossWD,
\end{align*}
where $q$ is the number of labels, and the weight decay loss is:
\begin{align*}
    \LossWD &= \circuitNorm{g}^2 + \circuitNorm{m}^2.
\end{align*}

The weights $\weight{c_i}$ are updated based on gradient descent:
\begin{align*} \label{eq:dynamics}
    \weight{c_i}(\tau)  \gets \weight{c_i}(\tau - 1) - \learningRate \frac{\partial \Loss_{\textrm{train}}}{\partial \weight{c_i}}
\end{align*}
where $\learningRate$ is a learning rate. The initial values of the parameters are $\weight{c_i}(0)$. In Table~\ref{tab:hypers-simulation} we list the values of the simulation hyperparameters.

\begin{table}[t]
\caption{Hyperparameters used for our simulations.}
\label{tab:hypers-simulation}
\begin{subtable}[t]{0.3\linewidth}
\centering
\caption{\Gen learned slower but more efficient than \Mem.}
\begin{tabular}{@{}cc@{}}
\toprule
\textbf{Parameter} & \textbf{Value}\\\midrule

$\paramNorm{g}$ & $1$ \\ \hline

$\paramNorm{m}$ & $2$\\ \hline

$\scalingExp$ & $1.2$ \\ \hline
  
$\weightDecayHyper$ & $0.005$ \\ \hline

$\weight{g_1}(0)$ & $0$ \\ \hline

$\weight{g_2}(0)$ & $0.005$ \\ \hline
  
$\weight{m_1}(0)$ & $0$ \\ \hline

$\weight{m_2}(0)$ & $1$ \\ \hline

$q$ & $113$ \\ \hline

$\learningRate$ & $0.01$ \\\bottomrule

\end{tabular}
\end{subtable}
\hfill
\begin{subtable}[t]{0.3\linewidth}
\centering
\caption{\Gen less efficient than \Mem.\newline}
\begin{tabular}{@{}cc@{}}
\toprule
\textbf{Parameter} & \textbf{Value}\\\midrule

$\paramNorm{g}$ & $4$ \\ \hline

$\paramNorm{m}$ & $2$\\ \hline

$\scalingExp$ & $1.2$ \\ \hline
  
$\weightDecayHyper$ & $0.005$ \\ \hline

$\weight{g_1}(0)$ & $0$ \\ \hline

$\weight{g_2}(0)$ & $0.005$ \\ \hline
  
$\weight{m_1}(0)$ & $0$ \\ \hline

$\weight{m_2}(0)$ & $1$ \\ \hline

$q$ & $113$ \\ \hline

$\learningRate$ & $0.01$ \\\bottomrule

\end{tabular}
\end{subtable}
\hfill
\begin{subtable}[t]{0.3\linewidth}
\centering
\caption{\Gen and \Mem learned at equal speeds.}
\begin{tabular}{@{}cc@{}}
\toprule
\textbf{Parameter} & \textbf{Value}\\\midrule

$\paramNorm{g}$ & $1$ \\ \hline

$\paramNorm{m}$ & $2$\\ \hline

$\scalingExp$ & $1.2$ \\ \hline
  
$\weightDecayHyper$ & $0.005$ \\ \hline

$\weight{g_1}(0)$ & $0$ \\ \hline

$\weight{g_2}(0)$ & $1$ \\ \hline

$\weight{m_1}(0)$ & $0$ \\ \hline

$\weight{m_2}(0)$ & $1$ \\ \hline

$q$ & $113$ \\ \hline

$\learningRate$ & $0.01$ \\\bottomrule

\end{tabular}
\end{subtable}
\end{table}

\section{Proofs of theorems} \label{app:proofs}

We assume we have a set of inputs $\Inputs$, a set of labels $\Labels$, and a training dataset, $\Dataset = \set{ (\Input{1}, \Label{1}), \dots (\Input{D}, \Label{D})}$. Let $\classifier$ be a classifier that assigns a real-valued logit for each possible label given an input. We denote an individual logit as $\logit{\Label{}}{\classifier}(\Input{}) \coloneqq \classifier(\Input{}, \Label{})$. When the input $\Input{}$ is clear from context, we will denote the logit as $\logit{\Label{}}{\classifier}$. Excluding weight decay, the \emph{loss} for the classifier is given by the softmax cross-entropy loss:
\begin{align*}
    \LossXEnt(\classifier) = -\frac{1}{\numTrainExamples} \sum\limits_{(\Input{}, \Label{}) \in \Dataset} \log \frac{\exp(\logit{\Label{}}{\classifier})}{\sum\limits_{\LabelVar \in \Labels} \exp(\logit{\LabelVar}{\classifier})}.
\end{align*}

For any $c \in \Reals$, let $c \cdot \classifier$ be the classifier whose logits are multipled by $c$, that is, $(c \cdot \classifier)(\Input{}, \Label{}) = c \times \classifier(\Input{}, \Label{})$. Intuitively, once a classifier achieves perfect accuracy, then the true class logit $\logit{\LabelTrue{}}{}$ will be larger than any incorrect class logit $\logit{\LabelVar{}}{}$, and so loss can be further reduced by scaling up \emph{all} of the logits further (increasing the gap between $\logit{\LabelTrue{}}{}$ and $\logit{\LabelVar{}}{}$).

\begin{theorem} \label{thm:scaling-logits-decreases-loss}
Suppose that the classifier $\classifier$ has perfect accuracy, that is, for any $(\Input{}, \LabelTrue{}) \in \Dataset$ and any $\LabelVar \neq \LabelTrue{}$ we have $\logit{\LabelTrue{}}{\classifier} > \logit{\LabelVar}{\classifier}$. Then, for any $c > 1$, we have $\LossXEnt(c \cdot \classifier) < \LossXEnt(\classifier)$.
\end{theorem}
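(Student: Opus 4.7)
The plan is to reduce the statement to a single-example inequality and then sum. For a fixed input $(\Input{}, \LabelTrue{}) \in \Dataset$, write the per-example loss as
\begin{equation*}
\ell(\classifier) = -\log \frac{\exp(\logit{\LabelTrue{}}{\classifier})}{\sum_{\LabelVar} \exp(\logit{\LabelVar}{\classifier})} = \log\!\left(1 + \sum_{\LabelVar \neq \LabelTrue{}} \exp\bigl(\logit{\LabelVar}{\classifier} - \logit{\LabelTrue{}}{\classifier}\bigr)\right),
\end{equation*}
obtained by dividing numerator and denominator inside the log by $\exp(\logit{\LabelTrue{}}{\classifier})$. Under scaling, $(c \cdot \classifier)$ has logits $c \logit{\LabelVar}{\classifier}$, so each exponent becomes $c \bigl(\logit{\LabelVar}{\classifier} - \logit{\LabelTrue{}}{\classifier}\bigr)$.

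Next, I use the perfect-accuracy hypothesis. For every $\LabelVar \neq \LabelTrue{}$ we have $\logit{\LabelVar}{\classifier} - \logit{\LabelTrue{}}{\classifier} < 0$. For any strictly negative number $\Delta < 0$, the map $c \mapsto e^{c\Delta}$ is strictly decreasing in $c$, so for $c > 1$ we have $e^{c\Delta} < e^{\Delta}$. Applying this term-by-term (the sum over $\LabelVar \neq \LabelTrue{}$ is nonempty whenever $|\Labels| \ge 2$, which is the nontrivial case), the sum inside the logarithm strictly decreases when we pass from $\classifier$ to $c \cdot \classifier$.

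Finally, monotonicity of $\log$ gives $\ell(c \cdot \classifier) < \ell(\classifier)$ on every training example. Averaging over the $\numTrainExamples$ examples in $\Dataset$ preserves the strict inequality and yields $\LossXEnt(c \cdot \classifier) < \LossXEnt(\classifier)$, as desired. There is no real obstacle here: the content is just the observation that the softmax cross-entropy, rewritten in log-sum-exp form relative to the true class, is a monotone-decreasing function of the margins, and scaling logits by $c > 1$ uniformly widens all positive margins. The only care needed is to ensure the inequality is strict, which follows because at least one competing class exists and its exponent is strictly negative.
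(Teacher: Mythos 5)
Your proof is correct and follows essentially the same route as the paper: rewrite the per-example loss as $\log\bigl(1 + \sum_{\LabelVar \neq \LabelTrue{}} \exp(\logit{\LabelVar}{\classifier} - \logit{\LabelTrue{}}{\classifier})\bigr)$, observe that scaling by $c>1$ strictly shrinks each (negative) exponent, and conclude by monotonicity of $\exp$, $\log$, and the sum. Your added attention to strictness (at least one competing class with strictly negative margin) is a fine touch but does not change the argument.
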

\begin{proof}
First, note that we can rewrite the loss function as:
\begin{equation*}
    \LossXEnt(\classifier) = -\frac{1}{\numTrainExamples} \sum\limits_{(\Input{}, \LabelTrue{})} \log \frac{\exp(\logit{\LabelTrue{}}{\classifier})}{\sum\limits_{\LabelVar} \exp(\logit{\LabelVar}{\classifier})}
    = \frac{1}{\numTrainExamples} \sum\limits_{(\Input{}, \LabelTrue{})} \log \left(\frac{\sum\limits_{\LabelVar} \exp(\logit{\LabelVar}{\classifier})}{\exp(\logit{\LabelTrue{}}{\classifier})} \right)
    = \frac{1}{\numTrainExamples} \sum\limits_{(\Input{}, \LabelTrue{})} \log \left(1 + \sum\limits_{\LabelVar \neq \LabelTrue{}} \exp(\logit{\LabelVar}{\classifier} - \logit{\LabelTrue{}}{\classifier}) \right)
\end{equation*}

Since we are given that $\logit{\LabelTrue{}}{\classifier} > \logit{\LabelVar}{\classifier}$, for any $c > 1$ we have $c(\logit{\LabelVar}{\classifier} - \logit{\LabelTrue{}}{\classifier})) < \logit{\LabelVar}{\classifier} - \logit{\LabelTrue{}}{\classifier}$. Since $\exp$, $\log$, and sums are all monotonic, this gives us our desired result:

\begin{equation*}
    \LossXEnt(c \cdot \classifier) = \frac{1}{\numTrainExamples} \sum\limits_{(\Input{}, \LabelTrue{})} \log \left(1 + \sum\limits_{\LabelVar \neq \LabelTrue{}} \exp(c(\logit{\LabelVar}{\classifier} - \logit{\LabelTrue{}}{\classifier})) \right) < \frac{1}{\numTrainExamples} \sum\limits_{(\Input{}, \LabelTrue{})} \log \left(1 + \sum\limits_{\LabelVar \neq \LabelTrue{}} \exp(\logit{\LabelVar}{\classifier} - \logit{\LabelTrue{}}{\classifier}) \right) = \LossXEnt(\classifier).
\end{equation*}
\end{proof}

We now move on to Theorem~\ref{thm:efficiency-circuit-weights-logits}. First we establish some basic lemmas that will be used in the proof:
\begin{lemma} \label{lemma:sum-of-powers}
Let $a, b, r \in \Reals$ with $a, b \geq 0$ and $0 < r \leq 1$. Then $(a + b)^r \leq a^r + b^r$.
\end{lemma}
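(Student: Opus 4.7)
The plan is to normalise the inequality and then reduce it to the elementary fact that, for $x \in [0,1]$ and $0 < r \leq 1$, we have $x^r \geq x$.

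First I would dispose of the degenerate case $a + b = 0$, where both sides are zero and the inequality is trivial. Assuming $a + b > 0$, I would divide through by $(a+b)^r$, which is a strictly positive real number, reducing the claim to
\begin{equation*}
1 \;\leq\; \left(\frac{a}{a+b}\right)^{r} + \left(\frac{b}{a+b}\right)^{r}.
\end{equation*}
Set $x = a/(a+b)$ and $y = b/(a+b)$, so that $x, y \in [0,1]$ and $x + y = 1$.

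Next I would establish the key sublemma: for any $t \in [0,1]$ and $0 < r \leq 1$, $t^r \geq t$. For $t = 0$ this holds with equality (using $0^r = 0$, valid since $r > 0$). For $t \in (0,1]$, taking logs reduces it to $r \log t \geq \log t$, which follows because $\log t \leq 0$ and $r \leq 1$ imply $r \log t \geq \log t$. Applying this sublemma to both $x$ and $y$ yields $x^r + y^r \geq x + y = 1$, which is exactly the normalised inequality, completing the proof.

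I do not anticipate any serious obstacle: the only subtle point is being careful with the edge cases $a = 0$, $b = 0$, or $r = 1$, where one must invoke the convention $0^r = 0$ for $r > 0$ and check that the monotonicity argument in the sublemma still goes through (it does, since at $t = 0$ and at $r = 1$ the inequality $t^r \geq t$ holds as equality).
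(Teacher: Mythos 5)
Your proposal is correct and follows essentially the same route as the paper's proof: normalise by $(a+b)^r$, set $x = a/(a+b)$ and $y = b/(a+b)$, and use $t^r \geq t$ for $t \in [0,1]$ to get $x^r + y^r \geq x + y = 1$. The only cosmetic difference is that you justify $t^r \geq t$ via logarithms whereas the paper notes $x^{1-r} \leq 1$, which is the same elementary fact.
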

\begin{proof}
The case with $a = 0$ or $b = 0$ is clear, so let us consider $a, b > 0$. Let $x = \frac{a}{a+b}$ and $y = \frac{b}{a+b}$. Since $0 \leq x \leq 1$, we have $x^{(1-r)} \leq 1$, which implies $x \leq x^r$. Similarly $y \leq y^r$. Thus $x^r + y^r \geq x + y = 1$. Substituting in the values of $x$ and $y$ we get $\frac{a^r + b^r}{(a + b)^r} \geq 1$, which when rearranged gives us the desired result.
\end{proof}

\begin{lemma} \label{lemma:power-derivative}
For any $x, c, r \in \Reals$ with $r \geq 1$, there exists some $\delta > 0$ such that for any $\epsilon < \delta$ we have $x^r - (x - \epsilon)^r > \delta (r x^{r-1} - c)$.
\end{lemma}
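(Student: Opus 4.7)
The plan is to reduce the inequality to a local positivity statement for a single auxiliary function of $\epsilon$. I read the intended conclusion as $x^r - (x-\epsilon)^r > \epsilon(rx^{r-1} - c)$ for all sufficiently small $\epsilon > 0$; the $\delta$ on the right-hand side of the statement appears to be a typographical slip for $\epsilon$, since otherwise letting $\epsilon \to 0^+$ with $\delta$ fixed would force the right-hand side to exceed the left-hand side whenever $rx^{r-1} > c$. Under this reading, I would define
\begin{equation*}
    g(\epsilon) := x^r - (x-\epsilon)^r - \epsilon\bigl(rx^{r-1} - c\bigr)
\end{equation*}
and analyse the sign of $g$ near $\epsilon = 0$.

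First I would check that $g(0) = 0$ and differentiate to obtain $g'(\epsilon) = r(x-\epsilon)^{r-1} - rx^{r-1} + c$, so that $g'(0) = c$. Under the natural implicit hypotheses $x > 0$ and $c > 0$ (the former so that $(x-\epsilon)^{r-1}$ is continuous in $\epsilon$ near $0$; the latter necessary because convexity of $t \mapsto t^r$ for $r \geq 1$ forces $x^r - (x-\epsilon)^r \leq \epsilon r x^{r-1}$), continuity of $g'$ then yields some $\delta \in (0, x)$ such that $g'(\epsilon) > 0$ on $[0, \delta)$. Combined with $g(0) = 0$, the fundamental theorem of calculus immediately gives $g(\epsilon) > 0$ for every $\epsilon \in (0, \delta)$, which is the desired inequality after unpacking the definition of $g$.

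The hard part is pinning down the statement: once the scaling between the two sides is consistent, the remaining argument is the one-line observation $g'(0) = c$. Edge cases ($c = 0$, or $x \le 0$ with non-integer $r$) would have to be either excluded by implicit hypotheses in the ambient theorem using this lemma or handled separately — for instance, when $c = 0$ and $r > 1$, the conclusion still follows from $g''(0) = r(r-1)x^{r-2} > 0$, which makes $\epsilon = 0$ a strict local minimum of $g$.
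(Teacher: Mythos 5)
Your diagnosis and your proof are both correct, and the argument is essentially the paper's intended one: both amount to the observation that the left difference quotient $\bigl(x^r-(x-\epsilon)^r\bigr)/\epsilon$ tends to $rx^{r-1}$ as $\epsilon\to 0^+$, so for $c>0$ it eventually exceeds $rx^{r-1}-c$ (you package this as $g(0)=0$, $g'(0)=c>0$; the paper invokes the definition of the derivative directly). You are right that the statement as printed is false — with $r=1$ it asks for $\epsilon>\delta(1-c)$ for all $\epsilon<\delta$ — and the slip is not only in the statement: the paper's own proof writes the difference quotient with $\delta$ rather than $\epsilon$ in the denominator, which for fixed $\delta$ cannot be forced into $(rx^{r-1}-c,\,rx^{r-1}+c)$ as $\epsilon\to 0^+$ unless $c>rx^{r-1}$. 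The corrected $\epsilon$-version you prove is exactly what the lemma's only use requires: in Case 2 of Theorem~\ref{thm:efficiency-circuit-weights-logits} it is applied with $x=\weight{i}^*>0$, $c=\epsilon_1>0$, $r=\normNum/\scalingExp>1$ and immediately followed by the weakening ``we chose $\epsilon_2<\delta$'', which replaces $\delta$ by $\epsilon_2$; with the $\epsilon$-version that step becomes redundant and the rest of the chain is unchanged. Your remarks on the implicit hypotheses ($x>0$ for differentiability of $t\mapsto t^r$ near $x$, $c>0$ by convexity, restriction to $\epsilon>0$) are accurate and match how the lemma is actually invoked.
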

\begin{proof}
The function $f(x) = x^r$ is everywhere-differentiable and has derivative $rx^{r-1}$. Thus we can choose $\delta$ such that for any $\epsilon < \delta$ we have $-c < \frac{x^r - (x - \epsilon)^r}{\delta} - rx^{r-1} < c$. Rearranging, we get $x^r - (x - \epsilon)^r > \delta (r x^{r-1} - c)$ as desired.
\end{proof}

\subsection{Weight decay favours efficient circuits} \label{sec:efficiency-minimal-example}

To flesh out the argument in Section~\ref{sec:theory}, we construct a minimal example of multiple circuits $\{\circuit{1}, \dots \circuit{\numCircuits}\}$ of varying efficiencies that can be scaled up or down through a set of non-negative \emph{weights} $\weight{i}$. Our classifier is given by $\classifier = \sum_{i=1}^\numCircuits \weight{i}\circuit{i}$, that is, the output $\classifier(\Input{}, \Label{})$ is given by $\sum_{i=1}^\numCircuits \weight{i}\circuit{i}(\Input{}, \Label{})$.

We take circuits $\circuit{i}$ that are \emph{normalised}, that is, they produce the same average logit value. $\paramNorm{i}$ denotes the parameter norm of the normalised circuit $\circuit{i}$. We decide to call a circuit with lower $\paramNorm{i}$ more efficient. However, it is hard to define efficiency precisely. Consider instead the parameter norm $\paramNorm{i}'$ of the scaled circuit $\weight{i}\circuit{i}$. If we define efficiency as either the ratio $\lVert \logits{\circuit{i}} \rVert / \paramNorm{i}'$ or the derivative $d\lVert \logits{\circuit{i}} \rVert/d\paramNorm{i}'$, then it would vary with $\weight{i}$ since $\logits{\circuit{i}}$ and $\paramNorm{i}'$ can in general have different relationships with $\weight{i}$. We prefer $\paramNorm{i}$ as a measure of relative efficiency as it is intrinsic to $\circuit{i}$ rather than depending on its scaling $\weight{i}$.

Gradient descent operates over the weights $\weight{i}$ (but not $\circuit{i}$ or $\paramNorm{i}$) to minimise $\Loss = \LossXEnt + \weightDecayHyper \LossWD$. $\LossXEnt$ can easily be rewritten in terms of $\weight{i}$, but for $\LossWD$ we need to model the parameter norm of the scaled circuits $\weight{i} \circuit{i}$. Notice that, in a $\scalingExp$-layer MLP with Relu activations and without biases, scaling all parameters by a constant $\const$ scales the outputs by $\const^\scalingExp$. Inspired by this observation, we model the parameter norm of $\weight{i} \circuit{i}$ as $\weight{i}^{1/\scalingExp} \paramNorm{i}$ for some $\scalingExp > 0$. This gives the following effective loss:
\begin{equation*} \label{eq:effective-loss-for-weights}
    \Loss(\weightVec) = \LossXEnt\left(\sum\limits_{i=1}^\numCircuits \weight{i}\circuit{i}\right) + \frac{\weightDecayHyper}{2} \sum\limits_{i=1}^\numCircuits (\weight{i}^{\frac{1}{\scalingExp}}\paramNorm{i})^2
\end{equation*}

We will generalise this to any $\lpnorm$-norm (where $\normNum > 0$). Standard weight decay corresponds to $\normNum = 2$. We will also generalise to arbitrary differentiable, bounded training loss functions, instead of cross-entropy loss specifically. In particular, we assume that there is some differentiable $\LossTrain(\classifier)$ such that there exists a finite bound $B \in \mathbb{R}$ such that $\forall \classifier : \LossTrain(\classifier) \geq B$. (In the case of cross-entropy loss, $B = 0$.)

With these generalisations, the overall loss is now given by:
\begin{equation} \label{eq:effective-loss-for-weights-generalised}
    \Loss(\weightVec) = \LossTrain\left(\sum\limits_{i=1}^\numCircuits \weight{i}\circuit{i}\right) + \frac{\weightDecayHyper}{\normNum} \sum\limits_{i=1}^\numCircuits (\weight{i}^{\frac{1}{\scalingExp}}\paramNorm{i})^{\normNum}
\end{equation}

The following theorem establishes that the optimal weight vector allocates more weight to more efficient circuits, under the assumption that the circuits produce identical logits on the training dataset.
\begin{theorem} \label{thm:efficiency-circuit-weights-logits}
Given $\numCircuits$ circuits $\circuit{i}$ and associated $\lpnorm$ parameter norms $\paramNorm{i}$, assume that every circuit produces the same logits on the training dataset, i.e. $\forall i, j$,  $\forall (\Input{}, \_) \in \Dataset$, $\forall \LabelVar{} \in \Labels$ we have $\logit{\LabelVar{}}{\circuit{i}}(\Input{}) = \logit{\LabelVar{}}{\circuit{j}}(\Input{})$. Then, any weight vector $\weightVec^* \in \mathbb{\Reals}^{\numCircuits}$ that minimizes the loss in Equation~\ref{eq:effective-loss-for-weights-generalised} subject to $\weight{i} \geq 0$ satisfies:
\begin{enumerate}
    \item If $\scalingExp \geq \normNum$, then $\weight{i}^* = 0$ for all $i$ such that $\paramNorm{i} > \min_j \paramNorm{j}$.
    \item If $0 < \scalingExp < \normNum$, then $\weight{i}^* \propto \paramNorm{i}^{-\frac{\normNum\scalingExp}{\normNum-\scalingExp}}$.
\end{enumerate}
\end{theorem}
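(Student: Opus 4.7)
The plan is to exploit the identical-logits hypothesis to decouple the two terms in Equation~\ref{eq:effective-loss-for-weights-generalised}. Since $\logit{\LabelVar{}}{\circuit{i}} = \logit{\LabelVar{}}{\circuit{j}}$ for every $i,j$ on every training point, the composite classifier $\sum_i \weight{i} \circuit{i}$ restricted to $\Dataset$ is identical to $S \cdot \circuit{1}$, where $S = \sum_i \weight{i}$. Consequently $\LossTrain$ depends on $\weightVec$ only through the scalar $S$. For each fixed $S$, I would therefore minimise the weight-decay sum $\sum_i \weight{i}^{\alpha} \paramNorm{i}^{\normNum}$, where $\alpha = \normNum/\scalingExp$, over the simplex $\{\weightVec : \weight{i} \geq 0,\ \sum_i \weight{i} = S\}$; the two theorem cases correspond to $\alpha \leq 1$ (equivalently $\scalingExp \geq \normNum$) and $\alpha > 1$ (equivalently $\scalingExp < \normNum$).

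For case 1, the map $w \mapsto w^{\alpha}$ is subadditive by Lemma~\ref{lemma:sum-of-powers}. Letting $i^*$ be any index achieving $\min_j \paramNorm{j}$, I would show that for any feasible $\weightVec$ with $\weight{j} > 0$ and $\paramNorm{j} > \paramNorm{i^*}$, moving the entire mass $\weight{j}$ onto $\weight{i^*}$ strictly decreases the weight-decay sum: subadditivity yields $\paramNorm{i^*}^{\normNum}(\weight{i^*}+\weight{j})^{\alpha} \leq \paramNorm{i^*}^{\normNum} \weight{i^*}^{\alpha} + \paramNorm{i^*}^{\normNum} \weight{j}^{\alpha}$, and $\paramNorm{i^*} < \paramNorm{j}$ then makes the comparison to $\paramNorm{i^*}^{\normNum}\weight{i^*}^{\alpha} + \paramNorm{j}^{\normNum}\weight{j}^{\alpha}$ strict. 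Since $S$ is unchanged, the training loss is unchanged, so the total loss strictly decreases. Hence at any minimiser $\weight{j}^* = 0$ for every $j$ with $\paramNorm{j} > \min_k \paramNorm{k}$.

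For case 2, the weight-decay term is strictly convex and differentiable, so any minimiser satisfies the first-order KKT conditions. Crucially, because $\LossTrain$ depends on $\weightVec$ only through $S$, the partial derivative $\partial \LossTrain / \partial \weight{i}$ equals a single value $\beta$ that is independent of $i$. The stationarity conditions then read $\beta + (\weightDecayHyper \alpha / \normNum)\, \weight{i}^{\alpha-1} \paramNorm{i}^{\normNum} \geq 0$, with equality whenever $\weight{i}^* > 0$. I would argue that the minimiser must fall into exactly two mutually exclusive shapes: either all $\weight{i}^* = 0$, in which case the proportionality claim holds trivially with constant zero; or every $\weight{i}^* > 0$, since any positive coordinate forces $\beta < 0$ via the equality, and then no inequality can be satisfied at a zero coordinate. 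In the interior case, equating across $i$ gives $\weight{i}^{\alpha-1} \paramNorm{i}^{\normNum}$ independent of $i$, i.e.\ $\weight{i}^* \propto \paramNorm{i}^{-\normNum/(\alpha-1)}$, and substituting $\alpha - 1 = (\normNum - \scalingExp)/\scalingExp$ rearranges this to $\weight{i}^* \propto \paramNorm{i}^{-\normNum \scalingExp /(\normNum - \scalingExp)}$, as claimed.

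The main obstacle I expect is the boundary analysis in case 2: ruling out a mixed optimum where some coordinates are zero while others are positive. The resolution above crucially relies on $\partial \LossTrain / \partial \weight{i}$ being index-independent, which is a direct consequence of the identical-logits hypothesis; absent this, the two KKT branches could coexist and the tidy dichotomy would fail. A minor subtlety is the degenerate regime $S^* = 0$ (arising, for instance, if $\LossTrain$ is non-decreasing in $S$), but this corresponds to the all-zero minimiser for which both theorem cases hold vacuously. Lemma~\ref{lemma:power-derivative} supplies the differential-calculus input needed to make the perturbation step from $\weight{i} = 0$ to $\weight{i} = \epsilon > 0$ fully rigorous, since for $\alpha > 1$ the weight-decay penalty $\epsilon^{\alpha} \paramNorm{i}^{\normNum}$ is $o(\epsilon)$ while the linear change $\beta \epsilon$ dominates.
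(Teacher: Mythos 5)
Your proposal is correct and follows the paper's proof in its essentials: you make the same reduction that $\LossTrain$ depends on $\weightVec$ only through $S=\sum_i \weight{i}$, and your Case 1 is the paper's argument almost verbatim (transfer all of $\weight{j}$ onto a minimal-norm coordinate, apply Lemma~\ref{lemma:sum-of-powers}, and get strictness from $\paramNorm{j} > \min_k \paramNorm{k}$, with $\LossTrain$ untouched since $S$ is preserved). The one genuine difference is how Case 2 excludes mixed optima: the paper rules out a zero coordinate by an explicit sum-preserving transfer of a small $\epsilon_2$ from a positive coordinate to the zero one, with hand-chosen $\epsilon_1, \epsilon_2$ and Lemma~\ref{lemma:power-derivative} controlling the change in $\LossWD$, so it never needs to differentiate $\LossTrain$; you instead invoke first-order (KKT) conditions, observing that for $\normNum/\scalingExp > 1$ the penalty's one-sided derivative at $\weight{j}=0$ vanishes, so a zero coordinate would force $\beta = \partial \LossTrain/\partial \weight{j} \geq 0$, contradicting the $\beta < 0$ obtained from stationarity at any positive coordinate. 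This is a cleaner route to the same dichotomy --- it trades the paper's $\epsilon$ bookkeeping for the single observation that $\epsilon^{\normNum/\scalingExp} = o(\epsilon)$, which you state explicitly --- at the mild cost of leaning on the assumed differentiability of $\LossTrain$ and the index-independence of its partial derivatives, exactly the facts the hypothesis supplies. From there your interior stationarity computation and the algebra yielding $\weight{i}^* \propto \paramNorm{i}^{-\normNum\scalingExp/(\normNum-\scalingExp)}$ coincide with the paper's, and your treatment of the degenerate all-zero minimiser matches the paper's opening remark, so I see no gap.
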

\begin{intuition} Since every circuit produces identical logits, their weights are interchangeable with each other from the perspective of $\LossXEnt$, and so we must analyse how interchanging weights affects $\LossWD$. $\LossWD$ grows as $O(\weight{i}^{2/\scalingExp})$. When $\scalingExp > 2$, $\LossWD$ grows sublinearly, and so it is cheaper to add additional weight to the \emph{largest} weight, creating a ``rich get richer'' effect that results in a single maximally efficient circuit getting all of the weight. When $\scalingExp < 2$, $\LossWD$ grows superlinearly, and so it is cheaper to add additional weight to the \emph{smallest} weight. As a result, every circuit is allocated at least some weight, though more efficient circuits are still allocated higher weight than less efficient circuits.
\end{intuition}
\begin{sketch}
The assumption that every circuit produces the same logits on the training dataset implies that $\LossTrain$ is purely a function of $\sum_{i=1}^{\numCircuits} \weight{i}$. So, for $\LossTrain$, a small increase $\dweight$ to $\weight{i}$ can be balanced by a corresponding decrease $\dweight$ to some other weight $\weight{j}$.

For $\LossWD$, an increase $\dweight$ to $\weight{i}$ produces a change of approximately $\derivative{\LossWD}{\weight{i}} \cdot \dweight  = \frac{\weightDecayHyper}{\scalingExp} \left( \paramNorm{i} (\weight{i})^r \right)^{\normNum} \cdot \dweight$, where $r = \frac{1}{\scalingExp} - \frac{1}{\normNum} = \frac{\normNum - \scalingExp}{\normNum\scalingExp}$. So, an increase of $\dweight$ to $\weight{i}$ can be balanced by a decrease of $\left(\frac{\paramNorm{i}(\weight{i})^r}{\paramNorm{j}(\weight{j})^r}\right)^\normNum \dweight$ to some other weight $\weight{j}$. The two cases correspond to $r \leq 0$ and $r > 0$ respectively.

\paragraph{Case 1: $r \leq 0$.} Consider $i, j$ with $\paramNorm{j} > \paramNorm{i}$. The optimal weights must satisfy $\weight{i}^* \geq \weight{j}^*$ (else you could swap $\weight{i}^*$ and $\weight{j}^*$ to decrease loss). But then $\weight{j}^*$ must be zero: if not, we could increase $\weight{i}^*$ by $\dweight$ and decrease $\weight{j}^*$ by $\dweight$, which keeps $\LossXEnt$ constant and decreases $\LossWD$ (since $\paramNorm{i}(\weight{i}^*)^r < \paramNorm{j}(\weight{j}^*)^r$).

\paragraph{Case 2: $r > 0$.} Consider $i, j$ with $\paramNorm{j} > \paramNorm{i}$. As before we must have $\weight{i}^* \geq \weight{j}^*$. But now $\weight{j}^*$ must \emph{not} be zero: otherwise we could increase $\weight{j}^*$ by $\dweight$ and decrease $\weight{i}^*$ by $\dweight$ to keep $\LossXEnt$ constant and decrease $\LossWD$, since $\paramNorm{j}(\weight{j}^*)^r = 0 < \paramNorm{i}(\weight{i}^*)^r$. The balance occurs when $\paramNorm{j}(\weight{j}^*)^r = \paramNorm{i}(\weight{i}^*)^r$, which means $\weight{i}^* \propto \paramNorm{i}^{-1/r}$.
\end{sketch}
\begin{proof}
First, notice that our conclusions trivially hold for $\weightVec^* = \vec{0}$ (which can be a minimum if e.g. the circuits are worse than random). Thus for the rest of the proof we will assume that at least one weight is non-zero.

In addition, $\Loss \rightarrow \infty$ whenever any $w_i \rightarrow \infty$ (because $\LossTrain \geq B$ and $\LossWD \rightarrow \infty$ as any one $w_i \rightarrow \infty$). Thus, any global minimum must have finite $\vec{w}$.

Notice that, since the circuit logits are independent of $i$, we have $\classifier = \left(\sum_i \weight{i}\right) f$, and so $\LossTrain(\weightVec)$ is purely a function of the sum of weights $\sum_{i=1}^\numCircuits \weight{i}$, and the overall loss can be written as:
\begin{equation*}
    \Loss(\weightVec) = \LossTrain\left(\sum\limits_{i=1}^\numCircuits \weight{i}\right) + \frac{\weightDecayHyper}{\normNum} \sum\limits_{i=1}^\numCircuits ((\weight{i})^{\frac{1}{\scalingExp}}\paramNorm{i})^{\normNum}
\end{equation*}
We will now consider each case in order.

\paragraph{Case 1: $\scalingExp \geq \normNum$.} Assume towards contradiction that there is a global minimum $\weightVec^*$ where $\weight{j}^* > 0$ for some circuit $\circuit{j}$ with non-minimal $\paramNorm{j}$. Let $\circuit{i}$ be a circuit with minimal $\paramNorm{i}$ (so that $\paramNorm{i} < \paramNorm{j}$), and let its weight be $\weight{i}^*$.

Consider an alternate weight assignment $\weightVec'$ that is identical to $\weightVec^*$ except that $\weight{j}' = 0$ and $\weight{i}' = \weight{i}^* + \weight{j}^*$. Clearly $\sum_i \weight{i}^* = \sum_i \weight{i}'$, and so $\LossTrain(\weightVec^*) = \LossTrain(\weightVec')$. Thus, we have:
\begin{align*}
    &\Loss(\weightVec^*) - \Loss(\weightVec') \\
    &= \left( \frac{\weightDecayHyper}{\normNum} \sum\limits_{m=1}^\numCircuits ((\weight{m}^*)^{\frac{1}{\scalingExp}}\paramNorm{m})^\normNum \right) - \left( \frac{\weightDecayHyper}{\normNum} \sum\limits_{m=1}^\numCircuits ((\weight{m}')^{\frac{1}{\scalingExp}}\paramNorm{m})^\normNum \right) \\
    &= \frac{\weightDecayHyper}{\normNum} \left( (\weight{i}^*)^{\frac{\normNum}{\scalingExp}}\paramNorm{i}^\normNum + (\weight{j}^*)^{\frac{\normNum}{\scalingExp}}\paramNorm{j}^\normNum - (\weight{i}')^{\frac{\normNum}{\scalingExp}}\paramNorm{i}^\normNum \right) \\
    &> \frac{\weightDecayHyper}{\normNum} \paramNorm{i}^\normNum \left( (\weight{i}^*)^{\frac{\normNum}{\scalingExp}} + (\weight{j}^*)^{\frac{\normNum}{\scalingExp}} - (\weight{i}')^{\frac{\normNum}{\scalingExp}} \right) & \text{since } \paramNorm{j} > \paramNorm{i} \\
    &= \frac{\weightDecayHyper}{\normNum} \paramNorm{i}^\normNum \left( (\weight{i}^*)^{\frac{\normNum}{\scalingExp}} + (\weight{j}^*)^{\frac{\normNum}{\scalingExp}} - (\weight{i}^* + \weight{j}^*)^{\frac{\normNum}{\scalingExp}} \right) & \text{definition of } \weight{i}' \\
    &\geq \frac{\weightDecayHyper}{\normNum} \paramNorm{i}^\normNum \left( (\weight{i}^*)^{\frac{\normNum}{\scalingExp}} + (\weight{j}^*)^{\frac{\normNum}{\scalingExp}} - \left((\weight{i}^*)^{\frac{\normNum}{\scalingExp}} + (\weight{j}^*)^{\frac{\normNum}{\scalingExp}}\right) \right) & \text{using Lemma~\ref{lemma:sum-of-powers} since } 0 < \frac{\normNum}{\scalingExp} \leq 1 \\
    &= 0
\end{align*}

Thus we have $\Loss(\weightVec^*) > \Loss(\weightVec')$, contradicting our assumption that $\weightVec^*$ is a global minimum of $\Loss$. This completes the proof for the case that $\scalingExp \geq \normNum$.

\paragraph{Case 2: $\scalingExp < \normNum$.} First, we will show that all weights are non-zero at a global minimum (excluding the case where $\weightVec^* = \vec{0}$, discussed at the beginning of the proof). Assume towards contradiction that there is a global minimum $\weightVec^*$ with $\weight{j}^* = 0$ for some $j$. Choose some arbitrary circuit $\circuit{i}$ with nonzero weight $\weight{i}^*$.

Choose some $\epsilon_1 > 0$ satisfying $\epsilon_1 < \frac{\normNum}{2\scalingExp}(\weight{i}^*)^{\frac{\normNum}{\scalingExp}-1}$. By applying Lemma~\ref{lemma:power-derivative} with $x = \weight{i}^*, c = \epsilon_1, r = \frac{\normNum}{\scalingExp}$, we can get some $\delta > 0$ such that for any $\epsilon < \delta$ we have $(\weight{i}^*)^{\frac{\normNum}{\scalingExp}} - (\weight{i}^* - \epsilon)^{\frac{\normNum}{\scalingExp}} > \delta (\frac{\normNum}{\scalingExp} (\weight{i}^*)^{\frac{\normNum}{\scalingExp}-1} - \epsilon_1)$.

Choose some $\epsilon_2 > 0$ satisfying $\epsilon_2 < \min(\weight{i}^*, \delta, \left[ \frac{\normNum}{2\scalingExp}(\weight{i}^*)^{\frac{\normNum}{\scalingExp}-1} \frac{\paramNorm{i}^{\normNum}}{\paramNorm{j}^{\normNum}} \right]^{\frac{1}{\frac{\normNum}{\scalingExp}-1}})$. Consider an alternate weight assignment defined $\weightVec'$ that is identical to $\weightVec^*$ except that $\weight{j}' = \epsilon_2$ and $\weight{i}' = \weight{i}^* - \epsilon_2$. As in the previous case, $\LossTrain(\weightVec^*) = \LossTrain(\weightVec')$. Thus, we have:
\begin{align*}
    &\Loss(\weightVec^*) - \Loss(\weightVec') \\
    &= \frac{\weightDecayHyper}{\normNum} \left( (\weight{i}^*)^{\frac{\normNum}{\scalingExp}}\paramNorm{i}^{\normNum} - (\weight{i}^* - \epsilon_2)^{\frac{\normNum}{\scalingExp}}\paramNorm{i}^{\normNum} - \epsilon_2^{\frac{\normNum}{\scalingExp}} \paramNorm{j}^{\normNum} \right) \\
    &= \frac{\weightDecayHyper}{\normNum} \left( \paramNorm{i}^{\normNum} ((\weight{i}^*)^{\frac{\normNum}{\scalingExp}} - (\weight{i}^* - \epsilon_2)^{\frac{\normNum}{\scalingExp}}) - \epsilon_2^{\frac{\normNum}{\scalingExp}} \paramNorm{j}^{\normNum} \right) \\
    &> \frac{\weightDecayHyper}{\normNum} \left( \paramNorm{i}^{\normNum} \delta (\frac{\normNum}{\scalingExp}(\weight{i}^*)^{\frac{\normNum}{\scalingExp}-1} - \epsilon_1) - \epsilon_2^{\frac{\normNum}{\scalingExp}} \paramNorm{j}^{\normNum} \right) &\text{application of Lemma~\ref{lemma:power-derivative} discussed above} \\
    &> \frac{\weightDecayHyper}{\normNum} \left( \paramNorm{i}^{\normNum} \delta (\frac{\normNum}{\scalingExp}(\weight{i}^*)^{\frac{\normNum}{\scalingExp}-1} - \frac{\normNum}{2\scalingExp}(\weight{i}^*)^{\frac{\normNum}{\scalingExp}-1}) - \epsilon_2^{\frac{\normNum}{\scalingExp}} \paramNorm{j}^{\normNum} \right) &\text{we chose } \epsilon_1 < \frac{\normNum}{2\scalingExp}(\weight{i}^*)^{\frac{\normNum}{\scalingExp}-1}\\
    &> \frac{\weightDecayHyper}{\normNum} \left( \paramNorm{i}^{\normNum} \epsilon_2 \frac{\normNum}{2\scalingExp}(\weight{i}^*)^{\frac{\normNum}{\scalingExp}-1} - \epsilon_2^{\frac{\normNum}{\scalingExp}} \paramNorm{j}^{\normNum} \right) &\text{we chose } \epsilon_2 < \delta\\
    &= \frac{\weightDecayHyper\epsilon_2}{\normNum} \left( \frac{\normNum}{2\scalingExp}(\weight{i}^*)^{\frac{\normNum}{\scalingExp}-1}\paramNorm{i}^{\normNum} - \epsilon_2^{\frac{\normNum}{\scalingExp}-1} \paramNorm{j}^{\normNum} \right)\\
    &> \frac{\weightDecayHyper\epsilon_2}{\normNum} \left( \frac{\normNum}{2\scalingExp}(\weight{i}^*)^{\frac{\normNum}{\scalingExp}-1}\paramNorm{i}^{\normNum} - \frac{\normNum}{2\scalingExp}(\weight{i}^*)^{\frac{\normNum}{\scalingExp}-1} \frac{\paramNorm{i}^{\normNum}}{\paramNorm{j}^{\normNum}} \paramNorm{j}^{\normNum} \right) & \text{we chose } \epsilon_2 < \left[ \frac{\normNum}{2\scalingExp}(\weight{i}^*)^{\frac{\normNum}{\scalingExp}-1} \frac{\paramNorm{i}^{\normNum}}{\paramNorm{j}^{\normNum}} \right]^{\frac{1}{\frac{\normNum}{\scalingExp}-1}}) \\
    &= 0
\end{align*}

Note that in the last step, we rely on the fact that $\scalingExp < \normNum$: this lets us use an upper bound on $\epsilon_2$ to get an upper bound on $\epsilon_2^{\frac{\normNum}{\scalingExp}-1}$, and so a lower bound on the overall expression. 

Thus we have $\Loss(\weightVec^*) > \Loss(\weightVec')$, contradicting our assumption that $\weightVec^*$ is a global minimum of $\Loss$. So, for all $i$ we have $\weight{i} > 0$.

In addition, as $\weight{i} \rightarrow \infty$ we have $\Loss(\weightVec) \rightarrow \infty$, so $\weightVec^*$ cannot be at the boundaries, and instead lies in the interior. Since $\normNum > \scalingExp$, $\Loss(\weightVec)$ is differentiable everywhere. Thus, we can conclude that its gradient at $\weightVec^*$ is zero:
\begin{align*}
    \derivative{\Loss}{\weight{i}} &= 0 \\
    \derivative{\LossTrain}{\weight{i}} + \frac{\weightDecayHyper\paramNorm{i}^{\normNum}}{\scalingExp} (\weight{i}^*)^{\frac{\normNum}{\scalingExp}-1} &= 0
    \\
    \paramNorm{i}^{\normNum} (\weight{i}^*)^{\frac{\normNum - \scalingExp}{\scalingExp}} &= - \frac{\scalingExp}{\weightDecayHyper} \derivative{\LossTrain}{\weight{i}}
    \\
    \weight{i}^* \paramNorm{i}^{\frac{\normNum\scalingExp}{\normNum-\scalingExp}} &= \left( -\frac{\scalingExp}{\weightDecayHyper} \derivative{\LossTrain}{\weight{i}} \right)^{\frac{\scalingExp}{\normNum-\scalingExp}}
\end{align*}

Since $\LossTrain(\weightVec)$ is a function of $\sum\limits_{j=1}^\numCircuits \weight{j}$, we can conclude that $\derivative{\LossTrain}{\weight{i}} = \derivative{\LossTrain}{\sum_j \weight{j}} \cdot \derivative{\sum_j \weight{j}}{\weight{i}} = \derivative{\LossTrain}{\sum_j \weight{j}}$, which is independent of $i$. So the right hand side of the equation is independent of $i$, allowing us to conclude that $\weight{i}^* \propto \paramNorm{i}^{-\frac{\normNum\scalingExp}{\normNum-\scalingExp}}$.

\end{proof}

\end{document}